\definecolor{cite_color}{HTML}{114083}
\definecolor{link_color}{RGB}{153, 0,0}  %
\definecolor{url_color}{RGB}{153, 102,  0}
\definecolor{emp_color}{RGB}{0,0,255}
 \crefname{section}{Sec.}{Sections}
 \crefname{theorem}{Theorem}{Theorems}
 \crefname{lemma}{Lemma}{Lemmas}
 \crefname{equation}{Eq.}{Equations}
 \crefname{proposition}{Proposition}{Propositions}
 \crefname{claim}{Claim}{Claims}
\crefname{appendix}{Appendix}{Appendices}
\crefname{algorithm}{Alg.}{Algorithms}
 \crefname{figure}{Fig.}{Figures}
 \crefname{table}{Table}{Tables}
 \crefname{remark}{Remark}{Remarks}
 \crefname{definition}{Def.}{Definitions}
 \crefname{corollary}{Corollary}{Corollaries}
\let \oldtextcircled \textcircled
\renewcommand{\textcircled}[1]{\oldtextcircled{\footnotesize #1}}
\setlist[itemize]{leftmargin=9mm}
\newcommand{\appendixtitle}[1]{
	\begin{center}
		\LARGE \bf #1
	\end{center}
}
\def\E{{\mathbb E}}
\def\M{{\cal M}}
\def \w{\mathbf{w}}
\def \x{\mathbf{x}}
\def \s{\mathbf{s}}
\def \BS{\mathbf{S}}
\def \BW{\mathbf{W}}
\def \R{{\mathbb{R}}}
\def \trans{\top}
\newcommand{\pare}[1]{{#1}}  %
\def \E{{\mathbb{E}}} %
\newcommand{\argmin}{{\arg\min}}
\newcommand{\algname}[1]{{\textsc{#1}}}
\newcommand{\fracpartial}[2]{\frac{\partial #1}{\partial #2}}
\newcommand{\bas}{\mathbf{e}} %
\def \chara{\mathbf{e}} %
\newcommand{\groundset}{\ensuremath{{N}}}
\newcommand{\sete}[3]{\mathbf #1 | #1_{#2}\!\gets\!#3}
\newtheorem{proposition}{Proposition}
\newtheorem{definition}{Definition}
\newtheorem{remark}{Remark}
\newcommand{\parti}{\text{Z}} %
\newcommand{\multi}{f^F_{\text{mt}}} %
\newcommand{\shapley}{\ensuremath{\text {Sh}}}
\newcommand{\banzhaf}{\ensuremath{\text {Ba}}}
\newcommand{\varindex}{\text{Variational Index}\xspace} %
\newcommand{\mfi}{{\text{MFI}}\xspace} %
\newcommand{\kl}[2]{\mathbb{KL}(#1\|#2)}
\newcommand{\entropy}[1]{\mathbb{H}(#1)}
\definecolor{shadecolor}{rgb}{0.94, 0.97, 1.0}
\newcommand{\markchange}[1]{\textcolor{black}{#1} }
\title{Energy-Based Learning for Cooperative Games,
with Applications to Valuation Problems in Machine Learning
}
\author{{Yatao Bian$^1$\thanks{Correspondence to: Yatao Bian <\texttt{yatao.bian@gmail.com}>}}, \; {Yu Rong}$^1$, {Tingyang Xu}$^1$, {Jiaxiang Wu}$^1$, {Andreas Krause}$^2$,  {Junzhou Huang}$^1$ \\
$^1$ Tencent AI Lab  \qquad \qquad \qquad $^2$ ETH Z{\"u}rich
}
\begin{document}

\maketitle

\begin{abstract}%

Valuation problems, such as  feature interpretation, data valuation and model valuation for ensembles, become increasingly more important in many machine learning applications.  Such problems are commonly addressed via  well-known game-theoretic criteria,  such as the Shapley value or Banzhaf value.
In this work, we present a novel energy-based treatment for cooperative games, with a theoretical justification via the maximum entropy principle.
Surprisingly, through mean-field variational inference in the energy-based model, we recover classical game-theoretic valuation criteria
by conducting \emph{one-step} of fixed point iteration for maximizing the ELBO objective.  This observation also further supports existing criteria, as they can be seen as attempting to decouple the correlations among players.
By running the fixed point iteration for \emph{multiple} steps, we achieve a trajectory of the variational  valuations,  among which we define the valuation with the best conceivable decoupling error as the \emph{\varindex}. \markchange{
We prove that under uniform initialization,  these variational valuations all satisfy a set of game-theoretic
axioms.}
We empirically demonstrate that the proposed variational valuations \markchange{enjoy lower decoupling error and better valuation performance  on certain synthetic and real-world valuation problems.\footnote{Project page \& code: \url{https://valuationgame.github.io}}}

\end{abstract}

\section{Introduction}
\label{problem_description}

\begin{wrapfigure}{tr}{0.45\textwidth}
\vspace*{-5.8ex}
  \begin{center}
    \includegraphics[width=0.45\textwidth]{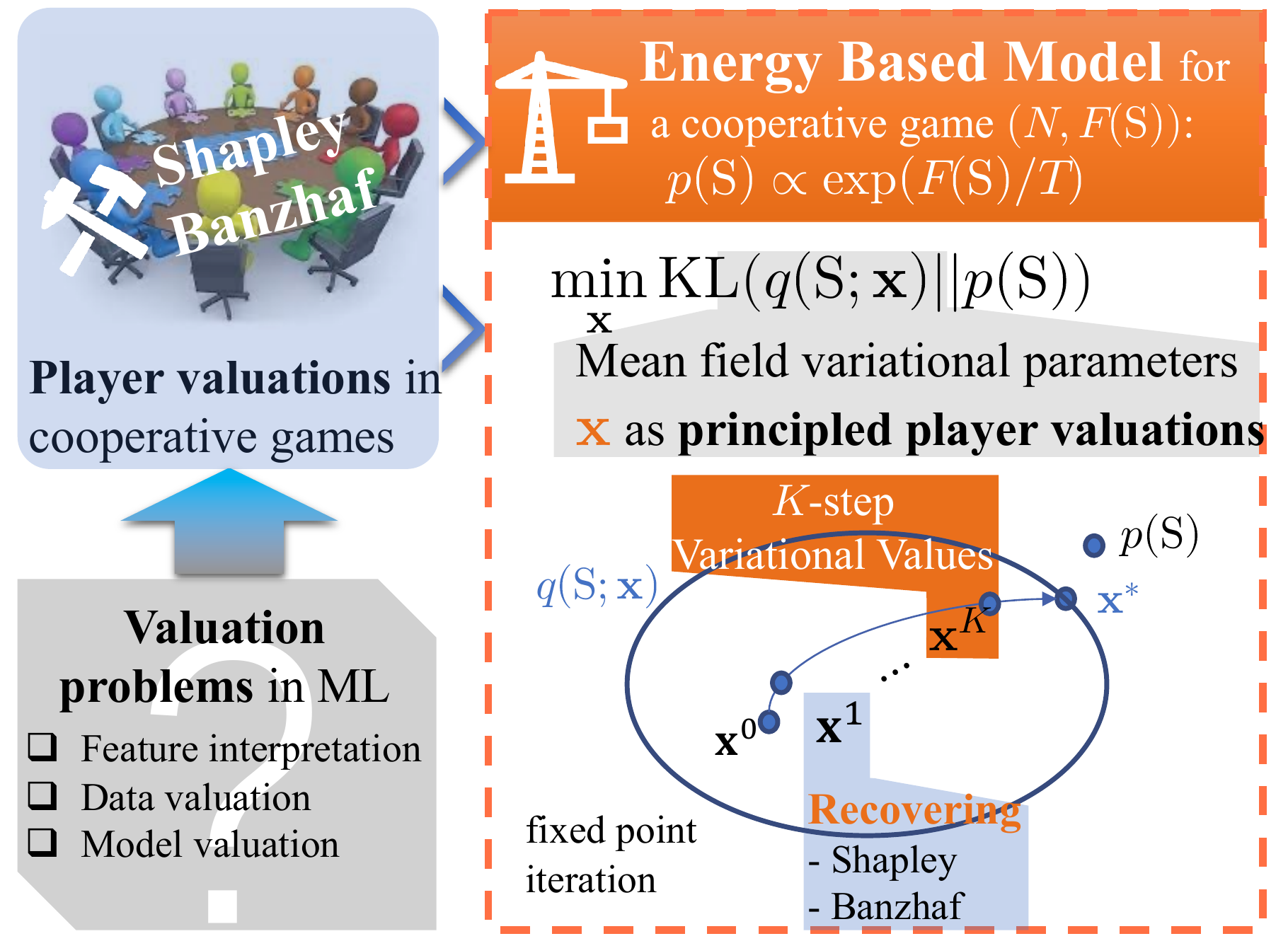}
  \end{center}
  \vspace*{-2ex}
  \caption{
  An energy based treatment of cooperative games leads to a series of new player valuations: $K$-step variational values,  satisfying basic valuation axioms: null player, marginalism \& symmetry.}
  \vspace*{-4.0ex}
  \label{fig_overview}
\end{wrapfigure}
\markchange{
Valuation problems are becoming increasingly more significant in various
machine learning applications, ranging from feature interpretation \citep{lundberg2017unified}, data valuation \citep{ghorbani2019data} to  model valuation for ensembles \citep{rozemberczki2021shapley}.}  They are often formulated as a player valuation  problem in cooperative games.  A cooperative game $(\groundset, F(S))$ consists of a grand coalition  $\groundset = \{1,..., n\}$ of $n$ players and a value function (a.k.a.~characteristic function) $F(S): 2^\groundset \rightarrow \R$ describing the collective payoff of a coalition/cooperation $S$.  A fundamental problem in cooperative  game theory is to assign an importance vector (i.e., solution concept) $\phi(F) \in \R^n$ to  $n$ players.

In this paper, we explore a {\em probabilistic treatment} of cooperative games $(\groundset, F(S))$.
\markchange{
Such a treatment makes it possible to conduct learning and inference
in a unified manner, and will yield connections with classical
valuation criteria. }
Concretely, we seek a probability distribution over coalitions $p(\BS=S)$\footnote{Note that  distributions over subsets of $\groundset$ are  equivalent to   distributions  of   $|\groundset|=n$    binary random  variables  $X_1, ..., X_n\in \{0,1\}$: We use $X_i$ as the indicator function of the event $i\in S$, or $X_i = [i\in S]$.
With slight abuse of notation, we use $\BS$ as a random variable represented as sets and often abbreviate $p(\BS=S)$ as $p(S)$.}, measuring the odds that a specific coalition $S$ happens.
Generally, we consider distributions where the probability of a coalition $p(S)$ grows monotonically with the payoff $F(S)$.

Among all the possible probability mass functions (pmfs), how should we construct the proper $p(S)$? We advocate to choose the pmf with the maximum entropy $\entropy{p}$.
This principle makes sense since maximizing the entropy minimizes the amount of prior information built into the distribution. In other words, it amounts to assuming nothing  about what is unknown, i.e.,  choosing the most ``uniform'' distribution.   Now finding a proper $p(S)$ becomes the following constrained optimization problem: suppose each coalition  $S$ is associated with a payoff $F(S)$ with probability $p(S)$. We would like to maximize the entropy $\entropy{p} = - \sum_{S\subseteq \groundset} p(S) \log p(S)$, subject to the constraints that $\sum_S p(S) = 1, p(S)\geq 0$ and $\sum_S p(S) F(S) = \mu$ (i.e., the average payoff is known as $\mu$).  Solving this optimization problem (derivation in   \cref{ap_maxent}), we reach the \emph{maximum entropy distribution}:
\begin{align}\label{eq_ebm}
    p(S) = \frac{\exp(F(S)/T)}{\parti}, \quad \parti:=\sum\nolimits_{S'\subseteq \groundset} \exp(F(S')/T),
\end{align}
where $T>0$ is the temperature.  This is an energy-based model  \citep[EBM, cf.][]{lecun2006tutorial} with $-F(S)$ as the energy function.

The above energy-based treatment admits two benefits:  i)  Where supervision is available, it enables learning of value functions $F(S)$ through efficient training techniques for energy-based learning, such as
noise contrastive estimation \citep{gutmann2010noise} and score matching \citep{hyvarinen2005estimation}. ii) Approximate inference techniques such as variational inference or sampling can be adopted to solve the valuation problem. \markchange{
Specifically, it enables to perform  mean-field variational inference where
parameters of the inferred surrogate distribution can be used as principled
player valuations.}

Below, we explore mean-field variational inference for the energy-based formulation (\cref{fig_overview}).
Perhaps surprisingly, by conducting only  \emph{one-step} fixed point iteration for maximizing the mean-field (ELBO) objective, we recover classical valuation criteria, such as the Shapley value \citep{shapley1953value} and the Banzhaf value \citep{penrose1946elementary,banzhaf1964weighted}.
This observation also further supports existing criteria, motivating them as decoupling the correlations among players via the mean-field approach.
By running the fixed point iteration for \emph{multiple} steps, we achieve a trajectory of valuations,  among which we define the valuation with the best conceivable decoupling error as the \emph{\varindex}.
Our major contributions can be summarized as below:

    i) We present a theoretically justified energy-based treatment for cooperative games.  Through mean field inference, we provide a unified perspective on popular game-theoretic criteria. %
    This provides an alternative motivation of existing criteria via a \emph{decoupling} perspective, i.e., decoupling correlations among $n$ players through the mean-field approach.
    ii) In pursuit of better decoupling performance, we propose to run fixed point iteration for \emph{multiple} steps, which generates a trajectory of valuations. \markchange{Under uniform initializations, they all satisfy a set of game-theoretic axioms, which are required for being suitable valuation criteria.   We  define  the  valuation  with  the best conceivable decoupling error as the {\varindex}.
    iii) Synthetic and real-world experiments demonstrate  intriguing properties of the proposed \varindex, including lower decoupling error and better valuation performance.}

\section{Preliminaries and Background}

\textbf{Notation.}
We assume
$\chara_i\in\R^n$ being  the standard $i^\text{th}$ basis vector and use boldface letters $\x\in \R^\groundset$ and $\x\in \R^n$
interchangebly to indicate an $n$-dimensional vector, where $x_i$ is
the $i^\text{th}$ entry of $\x$.
By default, $f(\cdot)$ is used to denote a continuous function, and
$F(\cdot)$ to represent a set function.
For a differentiable function $f(\cdot)$, $\nabla f(\cdot)$ denotes its gradient.
$\sete{x}{i}{k}$ is the operation of setting the
$i^\text{th}$ element of $\x$ to $k$, while keeping all other elements
unchanged, i.e., $\sete{x}{i}{k}=\x-x_i \bas_i + k\bas_i$.
For two sets $S$ and $T$,  $S+T$ and $S-T$ represent set union and set difference, respectively. $|S|$ is the cardinality of $S$. $i$ is used to denote the singleton $\{i\}$ with a bit abuse of notation.

\textbf{Existing valuation criteria.}
Various valuation criteria have been proposed from the area of cooperative games,
amongst them the most famous ones are the Shapley value \citep{shapley1953value} and the Banzhaf value, which is extended from
the Banzhaf power index \citep{penrose1946elementary,banzhaf1964weighted}.
For the Shapley value, the importance assigned to player $i$ is:
\begin{align}\label{def_banzhaf}
\shapley_i  = \sum\nolimits_{S\subseteq \groundset - i} [F(S + i) - F(S)] \frac{|S|! (n - |S| -1)!}{n!}.
\end{align}
One can see that it gives less weight to $n/2$-sized coalitions.
The Banzhaf value
assigns the following
importance to player $i$:
\begin{align}\label{def_shapley}
\banzhaf_i & = \sum\nolimits_{S\subseteq \groundset - i} [F(S + i) - F(S)] \frac{1}{2^{n - 1}},
\end{align}
which uses uniform weights for all the coalitions.
See \citet{greco2015structural} for a comparison of them.

\textbf{Valuation problems in machine learning.} Currently, most  classes of valuation problems \citep{lundberg2017unified,ghorbani2019data,sim2020collaborative,rozemberczki2021shapley} use Shapley value as the valuation criterion.  Along  with the rapid progress of model interpretation in the past decades \citep{zeiler2014visualizing,ribeiro2016should,lundberg2017unified,sundararajan2017axiomatic,petsiuk2018rise,wang2021shapley2}, attribution-based  interpretation aims to assign importance  to the features for a specific data instance $(\x\in \R^\groundset, y)$ given a black-box model $\M$. Here each feature maps to a player in the game $(\groundset, F(S) )$, and the value function $F(S)$ is usually the model response, such as the predicted probability for classification problems, when feeding a subset $S$ of features to $\M$.  The data valuation problem \citep{ghorbani2019data} tries to assign values to the samples in the training dataset $\groundset = \{(\x_i, y_i)\}_1^n$ for general supervised machine learning: one training sample corresponds to one player, and the value function $F(S)$ indicates the predictor performance on some test dataset given access to only a subset of the training samples in $S$.  Model valuation in ensembles \citep{rozemberczki2021shapley} measures importance of individual models in an ensemble in order to correctly label data points from a dataset, where each pre-trained model maps to a player and the value function measures the predictive performance of subsets of models.

\section{Related Work}

\textbf{Energy-based modeling.}
Energy based learning \citep{lecun2006tutorial} is a classical learning framework that  uses an energy function $E(\x)$ to measure
the quality of a data point $\x$.
Energy based models have  been applied to different domains,  such as data generation \citep{deng2020residual},  out-of-distribution detection \citep{liu2020energy},
reinforcement learning \citep{haarnoja2017reinforcement},  memory modeling \citep{bartunov2019meta},  discriminative learning \citep{grathwohl2019your,gustafsson2020train} and
biologically-plausible training \citep{scellier2017equilibrium}.
Energy based learning admits principled training methods, such as contrastive divergence \citep{hinton2002training},  noise contrastive estimation \citep{gutmann2010noise} and score matching \citep{hyvarinen2005estimation}.
For approximate inference, sampling based approaches are mainly MCMC-style algorithms, such as stochastic gradient Langevin dynamics \citep{Welling2011}.
For a wide class of EBMs with  submodular or supermodular energies \citep{djolonga14variational}, there exist provable mean field inference algorithms with constant factor approximation guarantees \citep{bian2019optimalmeanfield,sahin2020sets,bian2020continuous}.

\textbf{Shapley values in machine learning.}
Shapley values have been extensively used for valuation problems in machine learning,  including attribution-based interpretation \citep{lipovetsky2001analysis,cohen2007feature,strumbelj2010efficient,owen2014sobol,datta2016algorithmic,lundberg2017unified,chen2018shapley,lundberg2018consistent,kumar2020shapley,williamson2020efficient,covert2020understanding,wang2021shapley}, data  valuation \citep{ghorbani2019data,jia2019empirical,jia2019towards,wang2020principled,fan2021improving}, collaborative machine learning \citep{sim2020collaborative} and recently, model valuation in ensembles \citep{rozemberczki2021shapley}. For a  detailed overview of papers using Shapley values for feature interpretation, please see \cite{covert2020explaining} and the references therein.
To alleviate the exponential computational cost of exact evaluation, various methods have been proposed to approximate Shapley values in polynomial time \citep{ancona2017towards,ancona2019explaining}.
\cite{owen1972multilinear}  proposes the multilinear extension purely as a representation of cooperative games and \cite{okhrati2021multilinear} use it to develop sampling algorithms for Shapley values.

\section{Valuation for Cooperative Games: A Decoupling Perspective}
\label{sec_decoupling}

In the introduction, we have asserted that under the setting of cooperative games, the Boltzmann distribution (see \cref{eq_ebm}) achieves the maximum entropy  among all of the pmf functionals.
One can naturally view the importance assignment problem of cooperative games as a {\em decoupling problem}: The $n$ players in  a game $(\groundset, F(S))$  might be arbitrarily correlated  in a very complicated manner.  However,  in order to assign each of them an {\em individual} importance value,  we have to decouple their interactions, which can be viewed as a way to simplify their correlations.

\looseness -1 We therefore consider a surrogate  distribution $q(S; \x) $ governed by parameters in $\x$.   $q$ has to be simple, given our intention to decouple the correlations among the $n$ players.   A natural choice is to restrain $q(S; \x) $ to be fully factorizable, which leads to a mean-field approximation of $p(S)$.  The simplest form of  $q(S; \x)$ would be a $n$ independent Bernoulli distribution, i.e.,
$q(S; {\x}):= \prod_{i\in S}x_i \prod_{j\notin S}(1-x_j), \x\in
[0,1]^n$.
Given a  divergence measure $D(\cdot \| \cdot)$ for probability distributions, we can define the \emph{best conceivable decoupling error} to be the
divergence between $p$ and the best possible $q$.

\begin{definition}[Best Conceivable Decoupling Error]
\label{def_decoupling_error}
Considering a cooperative game  $(\groundset,  F(S))$, and given a divergence measure $D(\cdot \| \cdot)$ for probability distributions,
the \emph{decoupling error} is defined as the divergence between $q$ and $p$: $D(q \| p)$, and
the best conceivable decoupling error is defined as the
divergence between the best possible $q$ and $p$:
\begin{align}
  D^* :=  \min_{q}  D(q \| p).
\end{align}
\end{definition}
Note that the \emph{best conceivable decoupling error}
$D^*$ is closely related to the intrinsic coupling amongst $n$ players: if all the players are already independent with each other, then $D^*$ could be zero.

\subsection{Mean Field Objective for EBMs}
\label{sec_meanfield_lowerbounds}

\looseness -1 If we consider the \emph{decoupling error} $D(q \| p)$ to be the Kullback-Leibler divergence between $q$ and $p$, then we recover the mean field approach\footnote{Notably, one could also apply the reverse KL divergence $\kl{p}{q}$, which would lead to an expectation propagation \citep{minka2001expectation} treatment of cooperative games.}.
Given the EBM formulation in \cref{eq_ebm},
the classical mean-field inference approach aims to approximate $p(S)$ by
a fully factorized product  distribution
$q(S; {\x}):= \prod_{i\in S}x_i \prod_{j\notin S}(1-x_j), \x\in
[0,1]^n$,
by minimizing the distance measured w.r.t.~the Kullback-Leibler
divergence between $q$ and $p$.
Since $\kl{q}{p}$ is non-negative, we have:
\begin{align}\notag
& 0\leq \kl{q}{p} =  \sum\nolimits_{S\subseteq \groundset} q(S; {\x})
\log\frac{q(S; {\x})}{p(S)}\\
& = -  \E_{q(S; {\x})} [\log p(S)]  -  \entropy{q(S; {\x})}  \\\label{eq_ebm_logp}
& = -  \E_{q(S; {\x})} [ \frac{F(S)}{T} - \log \parti ]  -  \entropy{q(S; {\x})}  \\
&=
  -\sum_{S\subseteq \groundset} \frac{F(S)}{T}  \prod_{i\in S}x_i \prod_{j\notin S}(1-x_j)+ \sum\nolimits_{i=1}^{n} [x_i\log x_i + (1-x_i)\log(1-x_i)] + \log \parti.
\end{align}
In \cref{eq_ebm_logp} we plug in the EBM formulation that  $\log p(S) = \frac{F(S)}{T} - \log \parti$.
 Then  one can get
\begin{align}\notag
\log \parti  & \geq \sum_{S\subseteq \groundset} \frac{F(S)}{T} \prod_{i\in S}x_i \prod_{j\notin S}(1-x_j)- \sum\nolimits_{i=1}^{n} [x_i\log x_i + (1-x_i)\log(1-x_i)] \\ \label{elbo}
& = \frac{\multi(\x)}{T}   +  \entropy{q(S; \x)}   := (\text{\textcolor{blue}{ELBO}})
\end{align}
where $\entropy{\cdot}$ is the entropy, ELBO stands for the {\em evidence lower bound},  and
\begin{align}
    \multi(\x) := \sum\nolimits_{S\subseteq \groundset}  F(S) \prod\nolimits_{i\in S}x_i \prod\nolimits_{j\notin S}(1-x_j),  \x \in [0, 1]^n,
\end{align}
is the {\em multilinear extension} of  $F(S)$  \citep{owen1972multilinear,calinescu2007maximizing}.
Note that the multilinear extension plays a central role in modern combinatorial optimizaiton techniques \citep{feige2011maximizing}, especially for guaranteed submodular maximization problems \citep{krause2014submodular}.

Maximizing $(\text{ELBO})$ in \cref{elbo} amounts to
minimizing the Kullback-Leibler divergence between $q$ and $p$. If one solves this optimization problem to optimality, one can obtain the $q(S; \x^*)$ with the best conceivable decoupling error.  Here $x^*_i$ describes the odds that player $i$ shall participate in the game, so it can be naturally used to define the importance score of each player.

\begin{definition}[\varindex of Cooperative Games]
Consider a cooperative game  $(\groundset,  F(S))$ and its mean field approximation. Let  $\x^*$ be the variational marginals with the best conceivable decoupling error, we define $\s^*:= T\sigma^{-1}(\x^*)$ to be the variational index of the game. Formally,
\begin{align}\label{eq_kl}
    \x^* = \argmin_{\x} \kl{q(S; \x)}{p(S)},
\end{align}
where $\x^*$ can be obtained by maximizing the ELBO objective in \cref{elbo}, and $\sigma^{-1}( \cdot )$ is the inverse of the sigmoid function, i.e. $\sigma^{-1}(x) = \log\frac{x}{1-x}$. For a vector it is applied element-wise.
\end{definition}

\subsection{Algorithms for Calculating the \varindex}
\label{subsec_algorithms}

\textbf{Equilibrium condition.}
For coordinate $i$, the partial
derivative of the multilinear extension is $\nabla_i\multi(\x)$, and for
the entropy term, it is $\nabla_i \entropy{q(S; \x)} = \log \frac{1-x_i}{x_i}$. By setting the partial derivative of ELBO in \cref{elbo} to be 0, we have the equilibrium condition:
\begin{align}
x^*_i = \sigma(\nabla_i {{f^F_{\text{mt}}}}(\mathbf{x^*})/T) = \bigl(1+ \exp(-
    \nabla_i {{f^F_{\text{mt}}}}(\mathbf{x^*})/T \bigr)^{-1}, \quad \forall i \in N,
\end{align}
where $\sigma$ is the sigmoid function. This equilibrium condition implies that one cannot change  the value assigned to any player in order to further improve the overall   decoupling performance.
It also implies the fixed point iteration
$x_i \leftarrow \sigma(\nabla_i \multi(\x)/T)$.
    When  updating each coordinate sequentially, we recover the
classic naive mean field algorithm as shown in
\cref{app_alg}.

Instead, here we suggest to use the full-gradient  method shown in \cref{alg_mfi_ga} for maximizing the ELBO objective. As we will see later, the resultant valuations satisfy certain game-theoretic axioms.
It needs an initial marginal vector $\x^{0}\in [0,1]^n$ and the number of epochs $K$.  After $K$ steps of fixed point iteration, it returns the estimated marginal $\x^{K}$.
\vspace{-.1cm}
\begin{algorithm}[htbp]
	\caption{{Mean Field Inference with Full Gradient}: \textcolor{blue}{$\mfi(\x; K)$}}\label{alg_mfi_ga}
	\KwIn{A cooperative game $(\groundset,  F(S))$ with $n$ players. Initial marginals  $\x^\pare{0}\leftarrow \x \in [0, 1]^n$.   \#epochs $K$.
	}
	\KwOut{Marginals after $K$ steps of iteration: $\x^\pare{K}$}
	\For{$k = 1 \rightarrow K$}{
		{$\x^\pare{k} \leftarrow \sigma(\nabla \multi(\x^\pare{k-1})/T) = \bigl(1+ \exp(-
    \nabla \multi(\x^\pare{k-1})/T \bigr)^{-1}$ \;}
	}
\end{algorithm}
\vspace{-.1cm}

In case \cref{alg_mfi_ga} solves the optimization problem to optimality, we obtain the \varindex. However,  maximizing ELBO is in general a non-convex/non-concave problem, and hence one can only ensure reaching a stationary solution. Below, when we say \varindex, we therefore refer to its approximation obtained via \cref{alg_mfi_ga} by default.
Meanwhile, the $\mfi(\x; K)$ subroutine also defines a series of marginals, which enjoy interesting properties as we show in the next part.
So we define variational valuations through intermediate solutions of $\mfi(\x; K)$.
\vspace{-.1cm}
\begin{snugshade}
\vspace{-.2cm}
\begin{definition}[$K$-Step Variational Values]
\label{def_kstep_var_vals}
Considering a cooperative game  $(\groundset,  F(S))$ and its mean field approximation by \cref{alg_mfi_ga}, we define the \emph{$K$-Step Variational Values} initialized at $\x$ as:
\begin{align}
 T \sigma^{-1}(\mfi(\x; K)),
\end{align}
where $\sigma^{-1}()$ is the inverse of the sigmoid function ($\sigma^{-1}(x) = \log\frac{x}{1-x}$).
\end{definition}
\vspace{-.2cm}
\end{snugshade}
\vspace{-.2cm}
Notice  when running  more steps,  the $K$-Step variational value will be more close to the \varindex.
The gradient $\nabla \multi(\x)$ itself is defined with respect to an exponential sum via the multilinear extension.
Next we show how it can be approximated via principled sampling methods.

\textbf{Sampling methods for estimating the partial derivative.}
The partial derivative follows,
\begin{flalign}\label{eq_partial_derivative}
\nabla_i \multi(\x)  &
= \E_{q(S ;  (\sete{x}{i}{1}))} [F(S)]  - \E_{q(S;  (\sete{x}{i}{0}))} [F(S)]\\\notag
& =\multi(\sete{x}{i}{1}) - \multi(\sete{x}{i}{0})
\\\notag
&  =  \sum_{S\subseteq \groundset, S\ni i } F(S)
\prod_{j \in S - i}x_j
\prod_{j'\notin S}(1-x_{j'})
 \quad  - \sum_{S\subseteq
	\groundset - i }\ F(S) \prod_{j\in
	S} x_j \prod_{j'\notin S, j'\neq i}
(1-x_{j'})\\\notag
& = \sum_{S\subseteq
	\groundset - i  }\ [F(S+ i) - F(S)]   \prod_{j\in
	S} x_j \prod_{j'\in \groundset -  S - i}
(1-x_{j'})\\\notag
& =\! \sum_{S\subseteq
	\groundset - i  }\ [F(S+ i) - F(S)]   q(S; (\sete{x}{i}{0})) = \E_{S \sim q(S; (\sete{x}{i}{0}))}\ [F(S+ i) - F(S)].
\end{flalign}
All of the variational criteria are based on the calculation of the partial derivative   $\nabla_i \multi(\x) $,  which can be approximated by Monte Carlo sampling since  $\nabla_i \multi(\x)  = \E_{S \sim q(S; (\sete{x}{i}{0}))}\ [F(S+ i) - F(S)]$:  we first sample $m$ coalitions $S_k, k=1,...,m$ from the surrogate distribution $q(S; (\sete{x}{i}{0}))$, then approximate the expectation by the average $\frac{1}{m}\sum_{k=1}^m\ [F(S_k+ i) - F(S_k)]$.
According to the Chernoff-Hoeffding bound \citep{hoeffding1963probability}, the approximation will be arbitrarily close to the true value with increasingly more samples: With probability at least
$1- \exp(-m\epsilon^2/2)$, it holds that
$|\frac{1}{m}\sum_{k=1}^{m} [F(S_k+ i) - F(S_k)]  - \nabla_i \multi(\x)| \leq \epsilon \max_S
|F(S+i) - F(S)| $, for all $\epsilon > 0$.

\textbf{Roles of the initializer $\x^\pare{0}$ and the temperature $T$.}
This can be understood in the following respects: 1) The initializer $\mathbf{x}^0$ represents the initial credit assignments to the $n$ players, so it denotes the prior knowledge/initial belief of the contributions of the players;
2) If one just runs \cref{alg_mfi_ga} for one step,  $\mathbf{x}^0$ matters greatly to the output. However, if one runs \cref{alg_mfi_ga} for many steps,
$\mathbf{x}^k$ will converge to the stationary points of the ELBO objective.
Empirically, it takes around 5$\sim$10 steps to converge.  %
The temperature  $T$ controls the ``spreading'' of importance  assigned to the players: A higher $T$ leads to flatter assignments,  and a lower $T$ leads to more concentrated assignments.

\textbf{Computational efficiency of calculating \varindex and variational values.}
\cref{alg_mfi_ga} calculates the $K$-step variational values, \markchange{1-step variational value has the same computational cost as that of
Banzhaf value and of the integrand of the line integration of Shapley value in \cref{eq_shapley_line} below, since they all need to evaluate $\nabla \multi(\x) $.}
Sampling methods could help with approximating all of the three criteria when there are a large number of players.
The Variational Index can be approximated by the $K$-step variational value, where the number $K$ depends on when \cref{alg_mfi_ga} converges.  One can easily show that, under the setting of maximizing ELBO,  $\mathbf{x}^k$ will converge to some stationary point $\mathbf{x}^*$, based on the analysis of mean field approximation in \cite{wainwright2008graphical}.
We have also empirically verified the convergence rate of \cref{alg_mfi_ga} in \cref{subsec_convergence},
and find that it  converges within  5 to 10 steps.
So the  computational cost is roughly similar as that of Shapley value and Banzhaf value.

\subsection{Recovering Classical Criteria}

\looseness -1 Perhaps surprisingly, it is possible to recover classical valuation criteria via the $K$-step variational values as
in \cref{def_kstep_var_vals}.
Firstly,  for Banzhaf value,  by comparing with \cref{eq_partial_derivative} it reads,
\begin{align}
\banzhaf_i  = \sum\nolimits_{S\subseteq \groundset - i} [F(S + i) - F(S)] \frac{1}{2^{n - 1}}
& = \nabla_i \multi(0.5*\mathbf{1})  =  T \sigma^{-1}(\mfi(0.5*\mathbf{1}; 1) ),
\end{align}
which is the $1$-step variational value initialied at $0.5*\mathbf{1}$.
We can also recover the Shapley value through its connection to the multilinear extension \citep{owen1972multilinear,grabisch2000equivalent}:
\begin{align}\label{eq_shapley_line}
\shapley_i  = \int_0^1 \nabla_i \multi(x\mathbf{1}) dx  =  \int_0^1 T \sigma^{-1}(\mfi(x\mathbf{1}; 1) ) dx,
\end{align}
where the integration denotes integrating the partial-derivative of the multilinear extension
along the main diagonal of the unit hypercube.
A self-contained proof is given in \cref{app_recoving_proof}.

These insights offer a novel, unified interpretation of the two classical valuation indices:  both the Shapley value and Banzhaf value can be viewed as approximating the variational index by running {\em one step} of fixed point iteration for the decoupling (ELBO) objective.  Specifically, for the Banzhaf value,  it initializes $\x$ at $0.5*\mathbf{1}$, and runs one step of fixed point iteration.  For the Shapley value,  it also performs a one-step fixed point approximation. However, instead of starting at a single initial point, it averages over all possible initializations through the line integration in \cref{eq_shapley_line}.

\textbf{Relation to probabilistic values.}
Probabilistic values  for games \citep{weber1988probabilistic,monderer2002variations} capture a class of solution concepts, where the value of each player is given by some averaging of the player's marginal contributions to coalitions, and the weights depend on the coalitions only.
According to  \cite[Equation (3.1)]{monderer2002variations}, a solution $\phi$ is called a probabilistic value, if for each player $i$, there exists a probability $p^i \in \Delta(C^i)$, such that $\phi_i$ is the expected marginal contribution of $i$ w.r.t. $p^i$. Namely,
$
\phi_i = \sum_{S\in C^i} p^i(S) [F(S+i) - F(S)],
 $
where $C^i$ is the set of all subsets of $N-i$, and $\Delta(C^i)$ is the set of all probability measures on $C^i$.
One can easily see that, for any fixed $\mathbf{x}$, 1-step variational value in \cref{def_kstep_var_vals}  is a probabilistic value with
$p^i(S) = q(S;  (\mathbf{x}|x_i \leftarrow 0))$,
where $q(S;  \mathbf{x})$ is the surrogate distribution in our EBM framework.

\subsection{Axiomatisation of $K$-Step Variational Values}

Our EBM framework introduces a series of variational values controlled by $T$ and the running step number $K$.
We now establish that the variational values $T \sigma^{-1} (\mfi(\x; K))$ in \cref{def_kstep_var_vals} satisfy certain game-theoretic axioms (see \cref{appen_axioms} for definitions of five common axioms: Null player, Symmetry, Marginalism, Additivity and Efficiency).
\looseness -1 We prove that  all the variational values in the trajectory satisfy three fundamental axioms: null player, marginalism and symmetry.   The detailed  proof is deferred to \cref{app_proof_thm1}.
We expect it to be very difficult to find \emph{equivalent} axiomatisations  of the series of variational values, which we leave for future work.
Meanwhile, our methods incur a decoupling and fairness tradeoff by tuning the hyperparameters $K$ and $T$.
\vspace{-.1cm}
\begin{snugshade}
\vspace{-.2cm}
\begin{restatable}[Axiomatisation of $K$-Step Variational Values of \cref{def_kstep_var_vals}]{theorem}{restattheoremone}
\label{thm_axiom}
If initialized uniformly, i.e.,  $\x^0 = x\textbf{1}, x\in [0, 1]$, all the  variational values in the  trajectory  $T\sigma^{-1}
(\mfi(\x; k)), k=1,2,3...$
satisfy  the  null player, marginalism and symmetry axioms.
\end{restatable}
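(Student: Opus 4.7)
The plan is a single induction on the iteration index $k$, exploiting that the fixed-point map $\Phi(\x) := \sigma(\nabla\multi(\x)/T)$ inherits symmetries and invariances of the game $F$ through the multilinear extension, combined with the fact that the uniform initializer $\x^0 = x\mathbf{1}$ is invariant under every coordinate permutation. I would handle the three axioms separately but all within this common template.

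For \textbf{symmetry}, suppose players $i$ and $j$ are interchangeable, so that $F$, and therefore $\multi$, is invariant under the transposition $\pi_{ij}$ acting on coordinates. The gradient is then equivariant, $\nabla\multi(\pi_{ij}\x) = \pi_{ij}\nabla\multi(\x)$, and so is $\Phi$ since the sigmoid is applied coordinatewise. Starting from $\pi_{ij}\x^0 = \x^0$, a one-line induction yields $\pi_{ij}\x^k = \x^k$ for every $k$; in particular $x_i^k = x_j^k$, and applying $T\sigma^{-1}(\cdot)$ preserves the equality. For the \textbf{null player} axiom, if $F(S+i) - F(S) = 0$ for every $S \subseteq \groundset - i$, the expectation formula for $\nabla_i\multi(\x)$ derived in the preceding section gives $\nabla_i\multi(\x) = 0$ identically in $\x$. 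Hence $x_i^k = \sigma(0) = 1/2$ for every $k \ge 1$ regardless of the initializer, and the variational value $T\sigma^{-1}(1/2) = 0$, as required.

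For \textbf{marginalism}, I would interpret the axiom as invariance of $\phi_i(F)$ under transformations of $F$ that preserve the marginal contributions of every player simultaneously. Any two such games differ by an additive constant on $2^\groundset$, so the multilinear extensions differ by the same constant and have identical gradients; consequently $\Phi$ is identical under the two games and so is the trajectory $\{\x^k\}$, giving equality of the variational values. An equivalent and perhaps more appealing viewpoint, which makes the marginalist nature of the index explicit, is that the update can be written $T\sigma^{-1}(x_i^k) = \nabla_i\multi(\x^{k-1})$, a weighted expectation of marginal contributions $F(S+i) - F(S)$ under the factorized surrogate $q$, as already derived.

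The \textbf{main obstacle} is pinning down the exact form of marginalism being asked for. Young's strong monotonicity, requiring $\phi_i$ to depend only on the marginal contributions of player $i$ alone, already fails for $k \ge 2$: the iterate $x_j^{k-1}$ for $j \ne i$ feeds into the weights appearing in $\nabla_i\multi(\x^{k-1})$ and thereby carries information from other players' marginal contributions. The proof therefore must rely on the simultaneous-marginals version above, and one has to verify that this matches the axiom formulated in the paper's appendix. Once that is settled, the symmetry and null-player parts reduce to one-line inductions based on the equivariance of $\nabla\multi$ and the identity $\nabla_i\multi\equiv 0$ for null players, and the only role of the uniform-initialization hypothesis is in the symmetry step.
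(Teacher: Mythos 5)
Your proposal is correct and follows essentially the same route as the paper's proof: an induction on $k$ in which the null-player case uses $\nabla_i\multi\equiv 0$, the symmetry case shows interchangeable players keep equal coordinates from the uniform start (the paper verifies the inductive step by explicitly pairing the summands of $\nabla_i\multi$ and $\nabla_j\multi$ in two cases, where you invoke permutation equivariance of $\nabla\multi$, but the substance is identical), and the marginalism case observes that games with identical marginal contributions induce identical update maps and hence identical trajectories. The reading of marginalism you settle on --- the simultaneous version quantified over all $(i,S)$ --- is exactly the axiom the paper formalizes in its appendix, so the obstacle you flag is resolved in your favor.
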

\vspace{-.2cm}
\end{snugshade}
\vspace{-.2cm}
\markchange{
According to \cref{thm_axiom},   our proposed $K$-step variational values satisfy the minimal  set of axioms often associated with appropriate valuation criteria.
Note that specific realizations of the $K$-step variational values can also satisfy more axioms, for example,
the $1$-step variational value initialized at $0.5*\mathbf{1}$
 also satisfies the additivity axiom. Furthermore, we have the following observations:}

\markchange{
\textbf{Satisfying more axioms is not essential for valuation problems.} Notably, in cooperative game theory, one line of work is to seek for solution concepts that would satisfy more axioms.
However, for valuation problems in machine learning, this is arguably not essential.  For example, similar as argued by \cite{ridaoui2018axiomatisation},  efficiency does not make sense for certain games.
We give  a simple illustration  in \cref{append_misc}, which further shows that whether more axioms shall be considered  really depends on the specific scenario being modeled, which will be left for important future work. }

\section{Empirical Studies}
Throughout the experiments, we are trying to understand the following: 1) Would the proposed \varindex have lower decoupling error compared to others? 2) Could the proposed \varindex gain benefits compared to the classical valuation criteria for valuation problems?

\begin{figure}[tbp]
	\centering
		\vspace{-1cm}
	\includegraphics[width=1\textwidth]{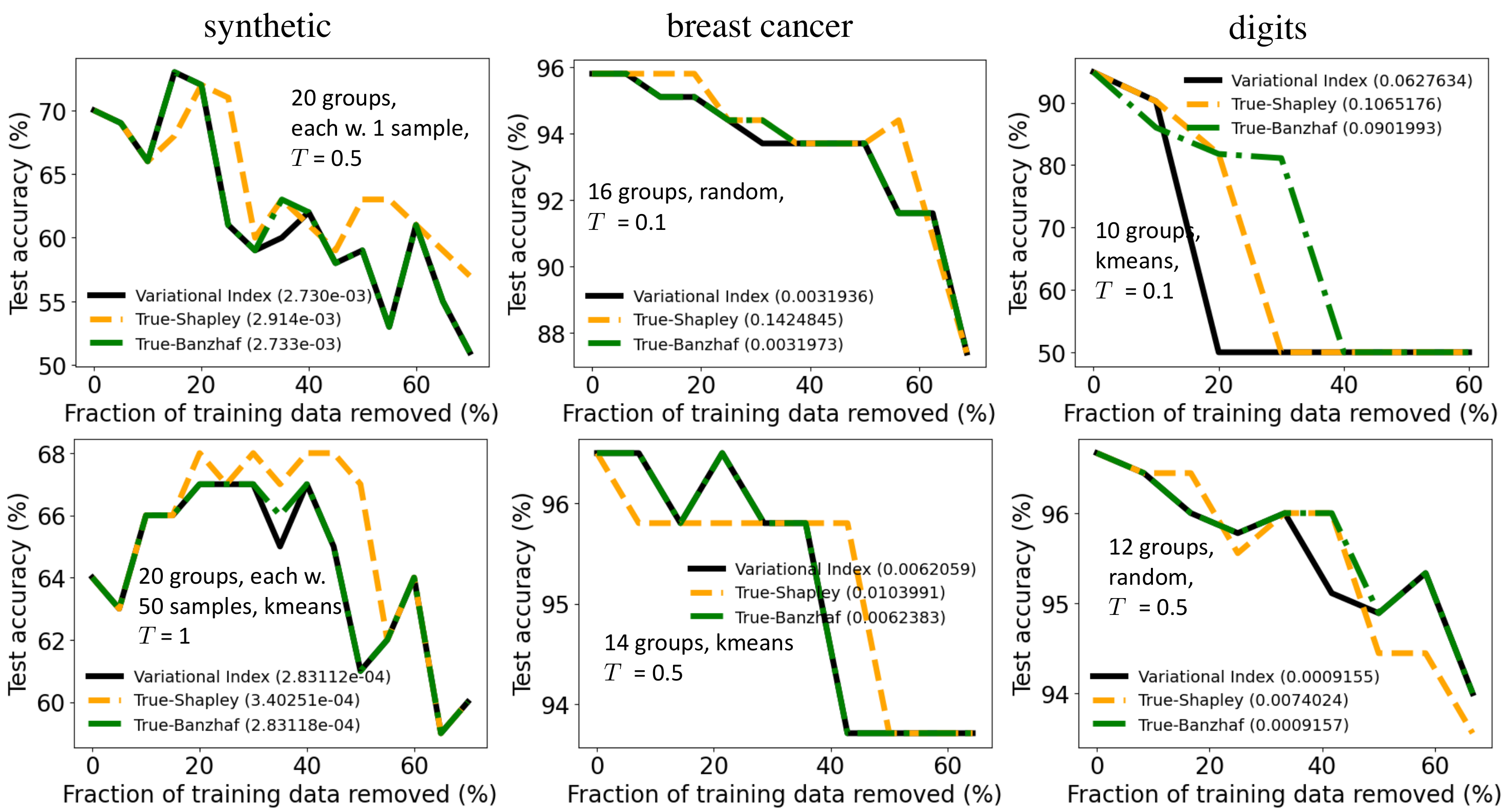}
	\caption{Data removal results. Numbers in the legend are the \emph{decoupling} errors. Columns: 1st: synthetic data; 2nd: breast cancer data with 569 samples; 3rd: digits data with 1797 samples. Specific configurations (e.g., temperature) are put inside the figure texts.}
	\label{fig_data_removing}
		\vspace{-.4cm}
\end{figure}

\looseness -1 Since we are mainly  comparing the quality of different criteria, it is necessary  to rule out the influence of approximation errors when estimating their values.  So we focus on small-sized problems where one can compute the exact values of these criteria in a reasonable time.  Usually this requires the number of players to be no more than 25. Meanwhile, we have also conducted experiments with a larger number of players in \cref{apendix_large_players}, in order to show the efficiency of sampling methods.
We choose $T$ empirically from the values of 0.1, 0.2, 0.5, 1.0.
We choose $K$  such that \cref{alg_mfi_ga} would converge. Usually, it takes around 5 to 10 steps to converge. We give all players a fair start, so $\x^0$
 was initialized to be $0.5 \times \mathbf{1}$. Code is available at \url{https://valuationgame.github.io}.

We first conduct synthetic experiments on submodular games (details defered to \cref{exp_syn}), in order to verify the quality of  solutions in terms of the true marginals $p(i\in \mathbf{S})$.   One can conclude that \varindex obtains better performance in terms of MSE and Spearman's rank correlation compared to
the one-point solutions (Shapley value and Banzhaf value) in all experiments.
More experimental results on data point and feature valuations are deferred to \cref{appd_more_exps}.

\subsection{Experiments on Data Valuations}

We follow the setting of \cite{ghorbani2019data} and reuse the code of \url{https://github.com/amiratag/DataShapley}.
We conduct data removal: training samples are sorted according to the valuations returned by different criteria, and then samples are removed in that order to check how much the test accuracy drops.  Intuitively,  the best criteria would induce the fastest drop of performance.
We experiment with the following  datasets:
a) Synthetic datasets similar as that of \cite{ghorbani2019data}; b)
The breast cancer dataset, which is a binary classification dataset with 569 samples;  c)  The digits dataset, that is a 10-class classification dataset with 1797 samples. The above two datasets are both from  UCI Machine Learning repository (\url{https://archive.ics.uci.edu/ml/index.php}).
Specifically, we cluster data points into groups and  studied two  settings: 1) Grouping the samples randomly; 2) Clustering the samples with the k-means algorithm.  For simplicity, we always use equal group sizes.  The data point removal corresponds to singleton groups.
\cref{fig_data_removing} shows the results.
One can observe that in certain situations the \varindex achieves the fastest drop rate. It always achieves the lowest decoupling error (as shown in the legends in each of the figures).
Sometimes  \varindex and Banzhaf show similar performance. We expect that this is because the Banzhaf value is a one-step approximation of \varindex, and for the specific problem considered, the ranking of the solutions does not change after one-step of fixed point iteration.
There are also situations where
the rankings of the three criteria are not very  distinguishable,
however, the specific values are also very different since the
decoupling error differs.

\subsection{Experiments on Feature Valuations/Attributions}

\begin{figure}[tbp]
	\centering
		\vspace{-1.2cm}
	\includegraphics[width=1\textwidth]{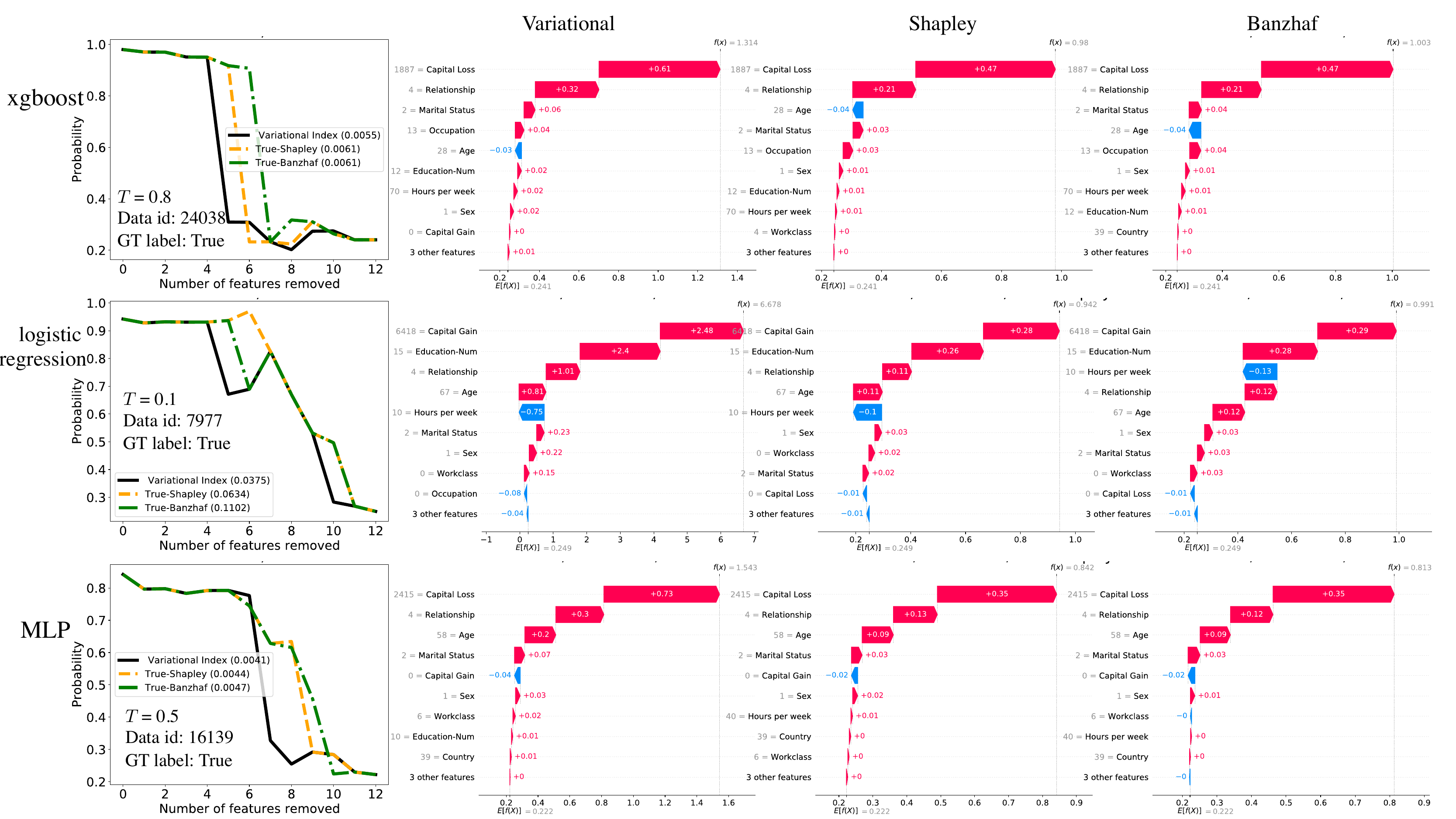}
	\caption{First column: Change of predicted probabilities when removing features. The \emph{decoupling error} is included in the legend.   Last three columns: waterfall plots of  feature importance.
	}
	\label{fig_feature_removing}
		\vspace{-.3cm}
\end{figure}
We follow the setting of \cite{lundberg2017unified} and reuse the code of \url{https://github.com/slundberg/shap} with an MIT License.
We train  classifiers  on the Adult dataset\footnote{\url{https://archive.ics.uci.edu/ml/datasets/adult}}, which predicts whether an adult's income exceeds 50k dollar per year based on census data.  It has
48,842 instances and  14 features  such as age, workclass, occupation, sex and capital gain (12 of them  used).

\textbf{Feature removal results.}
This experiment follows a similar fashion as the data removal experiment: we remove the features one by one according to the order defined by the returned criterion, then observe the change of predicted probabilities.  \cref{fig_feature_removing} reports the behavior of the three criteria.
The first row shows the results from an xgboost classifier (accuracy: 0.893), second row  a logistic regression classifier (accuracy: 0.842), third row  a multi-layer perceptron (accuracy: 0.861). For the probability dropping results, \varindex usually induces the fastest drop,  and it  always enjoys the smallest decoupling error, as expected from its mean-field nature.  From the waterfall plots, one can see that the three criteria indeed produce different rankings of the features. Take the first row for example. All criteria put ``Capital Loss'' and ``Relationship'' as the first two features.  However, the remaining features have different ranking:  \varindex and  Banzhaf  indicate that ``Marital Status'' should be ranked third, while Shapley ranks it in the fourth position.  It is hard to tell which ranking is the best because:  1)  There is no golden standard to determine the true ranking of features; 2) Even if there exists a ground truth ranking of some ``perfect model'', the trained xgboost model here might not be able to reproduce it, since it might not be aligned with the ``perfect model''.

\textbf{Average results.} We further provide the bar plots and averaged ranking across the adult datasets  in \cref{fig_stats}.   From the bar plots one can see that different criterion has slightly different values for each feature on average. Average rankings in the table demonstrate the difference: The three methods do not agree on the colored features, for example, ``Age'',  ``Education-Num'' and ``Captical Loss''.

\begin{figure}[tbp]
	\centering
		\vspace{-1.1cm}
	\includegraphics[width=1.02\textwidth]{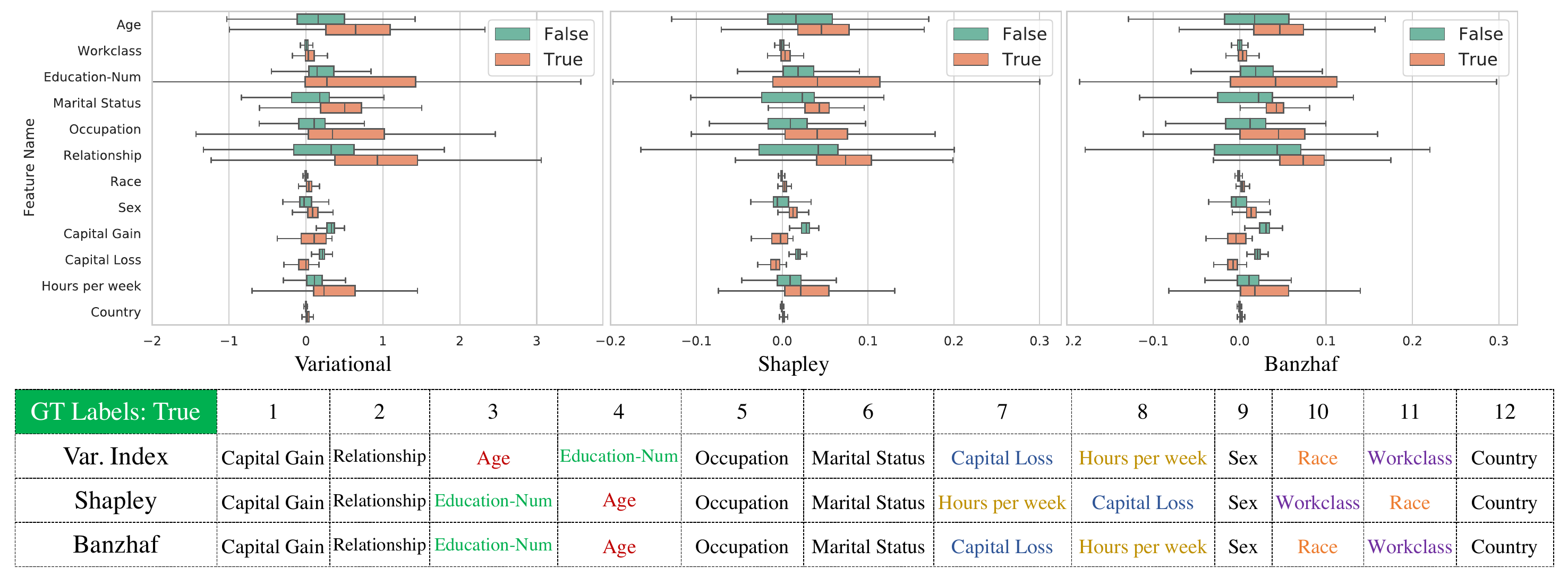}
	\caption{Statistics on valuations with the \emph{xgboost} classifier. First row: box plot of valuations.
	We always consider the predicted probability of the ground truth label. ``True'' means the samples with positive ground truth  label and ``False'' means with the negative ground truth label. Second row: Average ranking of the 12 features. Colored texts denote different rankings among the three criteria.}
	\label{fig_stats}
		\vspace{-.3cm}
\end{figure}

\subsection{Empirical Convergence Results of  \cref{alg_mfi_ga}}
\label{subsec_convergence}

\cref{tab_convergence} shows convergence results of \cref{alg_mfi_ga} on
feature and data valuation experiments. The value in the cells are the stepwise difference of $\mathbf{x}^k$ ,  $\frac{\|\mathbf{x}^k - \mathbf{x}^{k-1}\|^2}{n}$, which is a classical criterion to measure the convergence of iterative algorithms.
One can clearly see that
\cref{alg_mfi_ga} converges in 5 to 10 iterations.

\begin{table}[h!]
\vspace{-.3cm}
 \caption{Stepwise difference $\frac{\|\mathbf{x}^k - \mathbf{x}^{k-1}\|^2}{n}$ of \cref{alg_mfi_ga} for different experiments.}
\label{tab_convergence}
\centering
\footnotesize
\vspace{-.2cm}
 \begin{tabular}{r c c c c c c }
 \hline
   Step/Iteration Num    & 1 &	2 &	3 &	5	 & 9 & 10  \\
 \hline
Data Val (breast cancer) & 0.0023 & 3.61e-6 &1.53e-7 &2.77e-10 &9.12e-16&0 \\
Data Val (digits)&	0.00099&	5.93e-7&	1.46e-8&	8.92e-12&	9.25e-18	&0 \\
Data Val (synthetic)&	0.00059&	2.49e-8&	3.13e-10&	6.06e-14&	0&	0 \\
Feature Val (xgboost)&	0.0066&	1.68e-5&	8.71e-7&	2.35e-9&	1.75e-14&	9.25e-16 \\
Feature Val (LR)&	0.0092&	2.63e-5&	1.44e-6&	4.31e-9&	2.14e-15&	1.28e-16 \\
Feature Val (MLP)&	0.0040&	4.86e-6&	1.86e-7&	2.84e-10&	6.82e-16&	3.20e-17\\
 \hline
 \end{tabular}
\end{table}

\textsc{\large Discussions and Future Work.}
We have presented an energy-based treatment of cooperative games, in order to improve  the valuation  problem.
It is very worthwhile to explore more in the following directions:
1) Choosing the temperature $T$.
The temperature controls the level of  fairness since, when $T\rightarrow \infty$, all players have equal importance, when $T\rightarrow 0$, whereas a player has either 0 or 1 importance (assuming no ties).  Perhaps one can use an annealing-style algorithm in order to control the fairness level: starting with a high temperature and gradually decreasing it, one can obtain a series of importance values under different fairness levels.
2) Given the probabilistic treatment of cooperative games, one can naturally add priors over the players, in order to encode more domain knowledge.
It may also make sense to consider conditioning and marginalization in light of practical applications.
3) It is very interesting to explore the interaction of a group of players in the energy-based framework,  which would result in an ``interactive'' index among size-$k$ coalitions.
\section*{Ethics Statement and Broader Impact}

Besides the valuation problems explored in this work, cooperative game theory has already been applied to a wide range of disciplines, to name a few, economics, political science, sociology, biology, so this work could potentially  contribute to  broader domains as well.

\looseness -1 Meanwhile, we have to be aware of possible negative societal impacts, including:  1) negative side effects of the technology itself, for example,  possible unemployment issues due to the reduced amount of the need of valuations by human beings; 2)  applications in negative downstream tasks, for instance, the data point valuation technique could make it easier to conduct underground transactions of private data.

\section*{Reproducibility Statement}

All the datasets are publicly available as described in the main text.
In order to ensure reproducibility, we have made the  efforts in the following respects:  1) Provide code as  supplementary material. 2) Provide self-contained proofs of the main claims in \cref{app_recoving_proof,app_proof_thm1}; 3) Provide more  details on experimental configurations  and experimental results in \cref{appd_more_exps}.

{
\typeout{}
\bibliography{bib}
}

\newpage
\appendix
\appendixtitle{Appendix of ``Energy-Based Learning for Cooperative Games, with Applications to Valuation Problems in Machine Learning''}

\etocdepthtag.toc{mtappendix}
\etocsettagdepth{mtchapter}{none}
\etocsettagdepth{mtappendix}{subsection}
\tableofcontents

\section{Derivations of the Maximum Entropy Distribution}
\label{ap_maxent}

One may wonder why do we need energy-based treatment of valuation problems in machine learning?  Specifically, under the setting of cooperative games $(\groundset, F(S))$,  why we have to take the exponential form of EBM $p(S) \propto \exp(F(S)/T)$?  Because one may also formulate it as something else, say,  $p(S) \propto (1 +  |F(S)|)$?

\markchange{
In one word, because EBM is the maximum entropy distribution.
Being a maximum entropy distribution means minimizing the amount of prior information built into the distribution.
Another lens to understand it is that:
Since the distribution with the maximum entropy is the one that makes the fewest assumptions about the true distribution of data, the principle of maximum entropy can be seen as an application of Occam's razor.}
Meanwhile, many physical systems tend to move towards maximal entropy configurations over time \citep{jaynes1957information,jaynes1957information2}.

In the following we will give a derivation to show that $p(S) \propto \exp(F(S)/T)$ is indeed the maximum entropy distribution  for a cooperative game.
The derivation bellow is closely  following \cite{jaynes1957information,jaynes1957information2} for statistical mechanics.
 Suppose each coalition  $S$ is associated with a payoff $F(S)$ with probability $p(S)$. We would like to maximize the entropy $\entropy{p} = - \sum_{S\subseteq \groundset} p(S) \log p(S)$, subject to the constraints that $\sum_S p(S) = 1, p(S)\geq 0$ and $\sum_S p(S) F(S) = \mu$ (i.e., the average payoff is known as $\mu$).

Writing down the Lagrangian

\begin{align}
& L(p, \lambda_0, \lambda_1) :=  - \left[
\sum_{S\subseteq \groundset} p(S) \log p(S) +  \lambda_0 (\sum_{S\subseteq \groundset} p(S) - 1)
+ \lambda_1 (\mu - \sum_{S\subseteq \groundset} p(S) F(S))
\right]
\end{align}

Setting
\begin{align}
\fracpartial{L(p, \lambda_0, \lambda_1)}{p(S)}  =  - \left[
\log p(S) + 1 + \lambda_0 -  \lambda_1 F(S) \right]  \\
= 0
\end{align}
we get:
\begin{align}
    p(S) = \exp[-(\lambda_0 + 1 - \lambda_1 F(S) )]
\end{align}

\begin{align}
    \lambda_0 + 1 = \log \sum_{S\subseteq \groundset}  \exp (\lambda_1 F(S)) =: \log \parti
\end{align}
is the log-partition function. So,
\begin{align}
    p(S) = \frac{\exp[\lambda_1 F(S) )]}{\parti}
\end{align}

Note that the maximum value of the entropy is
\begin{align}
 & H_{\text{max}} =  - \sum_{S\subseteq \groundset} p(S) \log p(S)  \\
 & =   \lambda_0 + 1 - \lambda_1 \sum_{S\subseteq \groundset} p(S) F(S)  \\
 & = \lambda_0 + 1 - \lambda_1 \mu
\end{align}

So one can get,
\begin{align}
    \lambda_1 = - \fracpartial{H_{\text{max}}}{\mu} =: \frac{1}{T}
\end{align}
which defines the inverse temperature.

So we reach the exponential form of $p(S)$ as:
\begin{align}
    p(S) = \frac{\exp[F(S)/T)]}{\sum_{S\subseteq \groundset}  \exp [F(S)/T]}.
\end{align}

\section{Common Axioms of Valuation Criteria}
\label{appen_axioms}

Following the definitions in \cite{covert2020explaining}, the five common axioms are listed as bellow.  $\phi_i(F)$ denotes the value assigned to player $i$ in the game $(\groundset, F(S))$. Note that the notions might be slightly different in classical literature of game theory.

\textbf{Null player:}  For a player $i$ in the cooperative game $(\groundset,  F(S))$, if $F(S+i) = F(S)$ holds for all $S\subseteq \groundset - i$, then its value should be  $\phi_i(F) = 0$.

\textbf{Symmetry:}   For any two players $i, j$ in the cooperative game $(\groundset,  F(S))$, if $F(S+i) = F(S+j)$ holds for all $S\subseteq \groundset - i-j$, then it holds that   $\phi_i(F) = \phi_{j}(F)$.

\textbf{Marginalism:}   For two games $(\groundset,  F(S))$ and $(\groundset,  G(S))$ where all players have identical marginal contributions,  the players obtain equal valuations:   $F(S+i)  - F(S) = G(S+i)  - G(S)$ holds for all $(i, S)$, then it holds that   $\phi_i(F) = \phi_{i}(G)$.

\textbf{Additivity:}  For two games $(\groundset,  F(S))$ and $(\groundset,  G(S))$,
if they are combined, the total contribution of a player is equal to the sum of its individual contributions on each game: $\phi_i(F+G) =\phi_{i}(F) +  \phi_{i}(G)$.

\textbf{Efficiency:}  The values add up to the difference in value between the grand coalition and the empty coalition:  $\sum_{i\in \groundset} \phi_i(F) = F(\groundset) - F(\emptyset)$.

\begin{remark}[The notion of ``marginalism'']
Specifically,  the marginalism axiom  was first mentioned by  \cite[Equation 7 on page 70]{young1985monotonic}, where it was called the ``independence'' condition; Following \cite[page 121]{chun1989new}, it was formally called the  ``marginality''. This axiom requires a player's payoffs to depend only on his own marginal contributions -- whenever they remain unchanged, his payoffs should be unaffected.
\end{remark}

\begin{algorithm}[htbp]
	\caption{\algname{Naive Mean Field for calculating the \varindex}
        }\label{alg_naive_mf}
	\KwIn{A cooperative game $(\groundset,  F(S))$ with $n$ players. Initial marginal $\x^0\in [0, 1]^n$;   \#epochs $K$.
	}
	\KwOut{The \varindex $\s^* =  \sigma^{-1}(\x^*)$}
	{$\x \leftarrow \x^0$\;}
	\For{epoch from $1$ to $K$}{
	\For{$i = 1 \rightarrow n$}{
		{ let $v_i$  be the player being operated\;}
		{$x_{v_i} \leftarrow \sigma(\nabla_{v_i} \multi(\x)/T) = \bigl(1+ \exp(-
    \nabla_{v_i} \multi(\x)/T\bigr)^{-1}$ \;}
	}
	}
	{$\x^* \leftarrow \x$\;}
\end{algorithm}

\section{The Naive Mean Field Algorithm}
\label{app_alg}

The naive mean field algorithm is one of the most classical algorithm for mean field inference.  It is summarized in \cref{alg_naive_mf}.

\section{Proof of Recovering Classical Criteria}
\label{app_recoving_proof}

For Banzhaf value,  by comparing its definition in \cref{def_banzhaf} with \cref{eq_partial_derivative} it reads,
\begin{align}
\banzhaf_i  = \sum\nolimits_{S\subseteq \groundset - i} [F(S + i) - F(S)] \frac{1}{2^{n - 1}}
& = \nabla_i \multi(0.5*\mathbf{1})  =  T \sigma^{-1}(\mfi(0.5*\mathbf{1}; 1) ),
\end{align}
which is the 1-step variational value initialied at $0.5*\mathbf{1}$.

For Shapley value, according to \cite{grabisch2000equivalent}, here we prove a stronger conclusion regarding the generalization of Shapley value: Shapley interaction index, which is defined for any coalition $S$:
\begin{align}
    \shapley_S = \sum_{T\subseteq \groundset - S} \frac{(n-|T|-|S|)!|T|!}{(n-|S|+1)!}\sum_{L\subseteq S} (-1)^{|S|-|L|} F(L+T).
\end{align}

Given \cite{hammer2012boolean}, we have a second form of the multilinear extension as:
\begin{align}
    \multi(\x) = \sum_{S\subseteq \groundset} a(S) \prod_{i\in S} x_i, \x\in [0, 1]^n,
\end{align}
where $a(S) := \sum_{T\subseteq S} (-1)^{|S|-|T|}F(T)$ is the Mobius transform of $F(S)$.

Then, one can show the $S$-derivative of $ \multi(\x) $ is (suppose $S= \{i_1, ..., i_{|S|}\}$),
\begin{align}
    \Delta_S \multi(\x) := \frac{\partial^{|S|} \multi(\x) }{\partial x_{i_1}, ..., \partial x_{i_{|S|}} }  = \sum_{T \supseteq S} a(T) \prod_{i\in T- S} x_i.
\end{align}
So,
\begin{align}
    \Delta_S \multi(x\textbf{1}) = \sum_{T \supseteq S} a(T) x^{|T|-|S|}.
\end{align}
Then it holds,
\begin{align}
   \int_0^1 \Delta_S \multi(x\textbf{1}) dx = \int_0^1  \sum_{T \supseteq S} a(T) x^{|T|-|S|} dx \\
   =   \sum_{T \supseteq S} a(T) \int_0^1 x^{|T|-|S|} dx \\
   = \sum_{T \supseteq S} a(T) (|T|-|S| + 1)^{-1}
\end{align}
According to \cite{grabisch1997k}, we have $\shapley_S = \sum_{T \supseteq S} a(T) (|T|-|S| + 1)^{-1}$,  then we reach the conclusion:
\begin{align}
    \shapley_S  = \int_0^1 \Delta_S \multi(x\textbf{1}) dx.
\end{align}
When $|S|=1$, we recover the conclusion for Shapley value.

\section{Proof of \cref{thm_axiom}}
\label{app_proof_thm1}

\restattheoremone*

\begin{proof}[Proof of \cref{thm_axiom}]
In step $k$, we know that the value to player $i$ is:

\begin{align}\label{eq_sym_1}
T \sigma^{-1} (\mfi(\x; k))_i  =  \sum_{S\subseteq
	\groundset - i  }\ [F(S+ i) - F(S)]   \prod_{j\in
	S} x_j \prod_{j'\in \groundset -  S - i}
(1-x_{j'})
\end{align}
For the \textbf{null player} property,  since $F(S+i) = F(S)$ always holds, it is easy to see that $T \sigma^{-1} (\mfi(\x; k))_i = 0$ holds for all $i\in \groundset$.

Now we will  show that the \textbf{symmetry} property holds.  The value to player $i'$ is:
\begin{align}\label{eq_sym_2}
T \sigma^{-1} (\mfi(\x; k))_{i'}  =  \sum_{S'\subseteq
	\groundset - i'  }\ [F(S'+ i') - F(S')]   \prod_{j\in
	S'} x_j \prod_{j'\in \groundset -  S' - i}
(1-x_{j'})
\end{align}
Now let us compare different terms in the summands of \cref{eq_sym_1} and \cref{eq_sym_2}. We try to match the summands one by one.   There are two situations:

\textbf{Situation I:} For any  $S\subseteq \groundset - i-i'$, we choose $S'=S$.

In this case  we have $F(S+ i) - F(S) = F(S'+ i') - F(S')$.  For the products of $\x$ we have:
\begin{align}
    & \prod_{j\in
	S} x_j \prod_{j'\in \groundset -  S - i}(1-x_{j'}) - \prod_{j\in
	S'} x_j \prod_{j'\in \groundset -  S' - i}
(1-x_{j'}) = \\
& \prod_{j\in
	S} x_j \prod_{j'\in \groundset -  S - i-i'} (1-x_{j'}) [(1 - x_{i'}) - (1-x_i)]
\end{align}
We know that $x_{i'} = x_i$ holds from step 0, by simple induction, we know that $x_{i'} = x_i$ holds for step $k$ as well.
So in this situation, the summands equal to each other.

\textbf{Situation II:} For any $S = A+i'$, we choose  $S' = A+i$, where $A\subseteq \groundset - i-i'$.  In this case, it still holds that $F(S+ i) - F(S) = F(S'+ i') - F(S')$.
For the products of $\x$ we have:
\begin{align}
    & \prod_{j\in
	S} x_j \prod_{j'\in \groundset -  S - i}(1-x_{j'}) - \prod_{j\in
	S'} x_j \prod_{j'\in \groundset -  S' - i'}
(1-x_{j'}) = \\
& \prod_{j\in
	A} x_j \prod_{j'\in \groundset -  A - i-i'} (1-x_{j'}) [x_{i'} - x_i].
\end{align}
Again, by the simple induction, we know that $x_{i'} = x_i$ holds for step $k$.

The above two situations finishes the proof of symmetry.

For the \textbf{marginalisim} axiom,  one can see that
the  update step for the two games are identical, and it is easy
 to deduce that they produce exactly the same trajectories, given that they have the same initializations.

\end{proof}

\section{Miscellaneous Results in \cref{sec_decoupling}}
\label{append_misc}

\markchange{
\paragraph{Gradient of entropy  in   \cref{subsec_algorithms}}
Note that  $q$ is a fully factorized product  distribution
$q(S; {\x}):= \prod_{i\in S}x_i \prod_{j\notin S}(1-x_j), \x\in
[0,1]^n$, so its entropy $\entropy{q(S; \x)} $ can  be written as
the sum of entropy of $n$ independent Bernoulli distributions.
And the entropy of one Bernoulli distribution with parameter $x_i$ is
$$ -x_i \log x_i - (1-x_i) \log (1 - x_i) $$}

\markchange{
So we have,
\begin{align}
    \nabla_i \entropy{q(S; \x)} &  =  \nabla_i  \sum_{i=1}^n [-x_i \log x_i - (1-x_i) \log (1 - x_i)]  \\
    &  = \nabla_i  [-x_i \log x_i - (1-x_i) \log (1 - x_i)]  \\
    & = \log \frac{1-x_i}{x_i}
\end{align}}

\markchange{
\textbf{Satisfying more axioms is not essential for valuation problems.} Notably, in cooperative game theory, one line of work is to seek for solution concepts that would satisfy more axioms.
However, for valuation problems in machine learning, this is arguably not essential.  For example, similar as what \cite{ridaoui2018axiomatisation} argues,  efficiency   does not make sense for certain games.}

\markchange{
For a simple illustration,  let us consider a  voting game from a  classification model with 3 binary features $\x\in \{0, 1\}^3$ with weights $\w = [2, 1, 1]^\trans$: $f(\x) := \mathbbm{1}_{\{\w^\trans \x \geq 3 \}}$.  Now we are trying to find the valuation of each feature in $\groundset = \{x_1, x_2, x_3\}$.
Naturally, the value function in the corresponding voting game shall be $F(S) = f(\x_S)$ where $\x_S$ means setting the coordinates of $\x$ inside $S$ to be 1 while leaving others to be 0.
In this game let us count  how many times each feature could flip the classification result: for feature $x_1$, there are three situations: $F(\{1,2\}) - F(\{2\})$,   $F(\{1,3\}) - F(\{3 \})$ and
$F(\{1,2, 3\}) - F(\{2, 3 \})$; for feature $x_2$, there are one situation: $F(\{1,2\}) - F(\{1\})$; for feature $x_3$, there are one situation: $F(\{1,3\}) - F(\{1\})$. Then the voting power (or valuation) of each feature shall follows a $3:1:1$ ratio.
By simple calculations, one can see that the Banzhaf values of the three features are $\frac{3}{4}, \frac{1}{4}, \frac{1}{4}$, which is consistent with the ratio of the expected voting power. However, the Shapley values of them are $\frac{4}{6}, \frac{1}{6}, \frac{1}{6}$, which is not consistent due  to satisfying the efficiency axiom.  }

\markchange{
By the above example we are trying to explain that for valuation problems, satisfying more axioms is not necessary, sometimes even does not make sense. Whether more axioms shall be considered and which sets of them shall be added really depend on the specific scenario, which will be left for important future work. }

\paragraph{The ``one-shot sampling trick'' to accelerate \cref{alg_mfi_ga}.}

Indeed,  \varindex needs 5 to 10 iterations to converge. In each iteration
one has to evaluate the gradient of multilinear extension  $\nabla f_{\text{mt}}^F(\mathbf{x})$, which needs MCMC sampling to estimate the exponential sum.

Here we suggest a ``one-shot sampling trick'',  when it is expensive to evaluate the value function  $F(S)$.
This trick could reuse the sampled values in each iteration, such that Alg. 1 could run with the similar cost as calculating  Banzhaf values.

The one-shot sampling trick is built upon one formulation taken from \cref{eq_partial_derivative}:
$$\nabla_i f_{\text{mt}}^F(\mathbf{x}) =  \sum_{S\subseteq
	N - i  }\ [F(S+ i) - F(S)]   q(S; ({\mathbf x | x_{i}\gets 0}))$$
For coordinate $i$ of $\nabla f_{\text{mt}}^F(\mathbf{x})$,   we can firstly sample $m$ coalitions
uniformly randomly from $2^{N-i}$, and evaluate their marginal contributions.  Then in each of the following iterations, one could reuse the one-shot sampled
marginal contributions to estimate the gradient according to the above equation.  In this way, we could make the cost of multi-step running of \cref{alg_mfi_ga}
similar as that of Banzhaf value in terms of the number of $F(S)$ evaluations.
Compared to the original iterative sampling from $q$, this one-shot sampling
might come with a variance-complexity tradeoff, for which we will explore as a future work.

\section{More Configuration Details and Experimental Results}
\label{appd_more_exps}

\subsection{Synthetic Experiments on Submodular (FLID) Games}
\label{exp_syn}

\begin{figure}[htbp]
	\centering
	\includegraphics[width=1.01\textwidth]{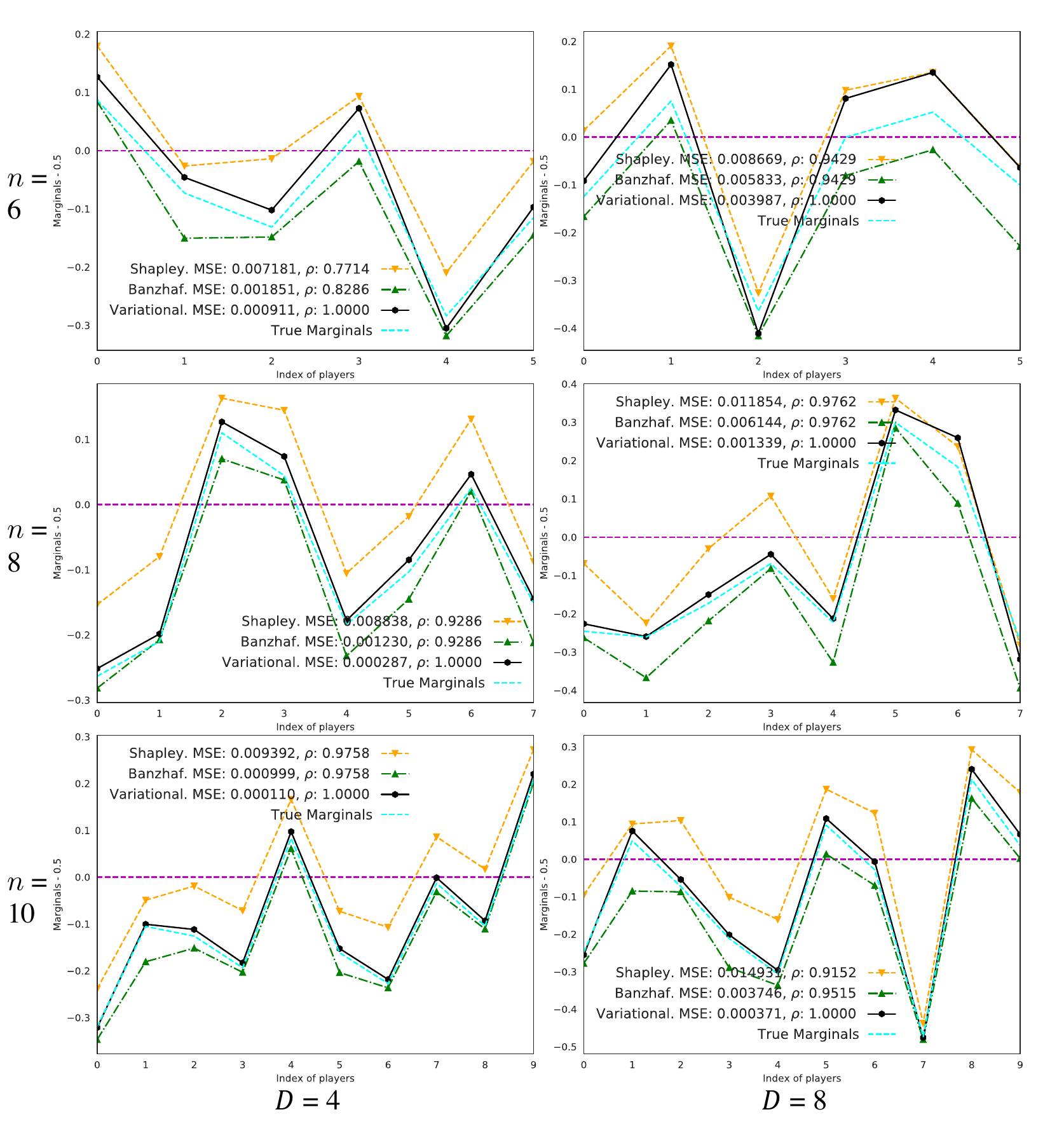}
	\caption{Comparison of different importance measures for FLID games. Hidden dimensions: First column $D$=4, second column $D$=8. $n$ denotes \# of players. Vertical lines means (marginals - 0.5) since we would like to clearly show positive players (marginal > 0.5) and negative players (marginal < 0.5).}
	\label{fig_flid}
\end{figure}

Here we define a synthetic game with the value function as
a FLID   \citep{Tschiatschek16diversity}  objective $F(S)$, which is a diversity boosting model satisfying the submodularity property.
We know that its multilinear extension admits polynomial  time algorithms \citep{bian2019optimalmeanfield}.  Let
$\BW\in \R_+^{|\groundset|\times D }$ be the weights, each row
corresponds to the latent representation of an item, with  $D$ as the
dimensionality. Then
\begin{align}
  F(S) := & \sum\nolimits_ {i\in S} u_i + \sum\nolimits_{d=1}^{D} (
            \max_{i\in S} W_{i,d} - \sum\nolimits_{i\in S} W_{i,d} )  =  \sum\nolimits_ {i\in S} u'_i + \sum\nolimits_{d=1}^{D}
            \max_{i\in S}W_{i,d},
\end{align}
which models both coverage and diversity, and
$u'_i = u_i - \sum_{d=1}^{D} W_{i,d}$.
In order to test the performance of the proposed variational objectives,
we consider  small synthetic games with 6, 8 and 10 players such that the ground truth marginals can be computed exhaustively.
We would like to compare with the true marginals $p(i\in \BS)$ since they represent
the probability that player $i$ participates in all coalitions, which is hard to compute in general.  The distance to the true marginals is also a natural measure of the decoupling error as defined in \cref{def_decoupling_error}.
We apply a sigmoid function to Shapley value and Banzhaf value in order to translate them to  probabilities.
We calculate the mean squared error (MSE) and Spearman's rank correlation ($\rho$) to the ground truth marginals $p(i\in \BS)$ and  report them in the figure legend.
\cref{fig_flid} collects the figures, one can see that the \varindex clearly obtains better performance in terms of MSE and Spearman's rank correlation compared to
the one-point solutions (Shapley value and Banzhaf value) in all experiments.

\subsection{Details of the Models for Feature Valuations}

For xgboost, the train  accuracy is   0.8934307914376094, the specific configuration  with the xgboost package is:

\lstset{language=Python}
\lstset{basicstyle=\footnotesize}
\begin{lstlisting}
XGBClassifier(base_score=0.5, booster='gbtree', colsample_bylevel=1,
              colsample_bynode=1, colsample_bytree=1, gamma=0, gpu_id=-1,
              importance_type='gain', interaction_constraints='',
              learning_rate=0.300000012, max_delta_step=0, max_depth=6,
              min_child_weight=1, missing=nan, monotone_constraints='()',
              n_estimators=100, n_jobs=56, num_parallel_tree=1,
              random_state=0, reg_alpha=0, reg_lambda=1,
              scale_pos_weight=1, subsample=1, tree_method='exact',
              validate_parameters=1, verbosity=None)
\end{lstlisting}

We used the sklearn package for the logistic regression and MLP classifiers.
For logistic regression, the accuracy is 0.8418967476428857, the specific configuration is:
\begin{lstlisting}
LogisticRegression(random_state=0, solver="liblinear", C=0.5)
\end{lstlisting}

For the MLP, its accuracy is 0.8614600288688923, and the  configuration is:
\begin{lstlisting}
MLPClassifier(random_state=0, max_iter=300,
              learning_rate_init=0.002,
              hidden_layer_sizes=(50,50))
\end{lstlisting}

\subsection{More Results on Feature Valuations}

In this part we provide more results on feature removal  in \cref{fig_appen_feature_removing1,fig_appen_feature_removing2}.  \cref{fig_appen_feature_removing1} shows
similar behavior as that shown in the main text.
\cref{fig_appen_feature_removing2} provides not very distinguishable  results of the three criteria.

\begin{figure}[h!]
	\centering
	\includegraphics[width=1.02\textwidth]{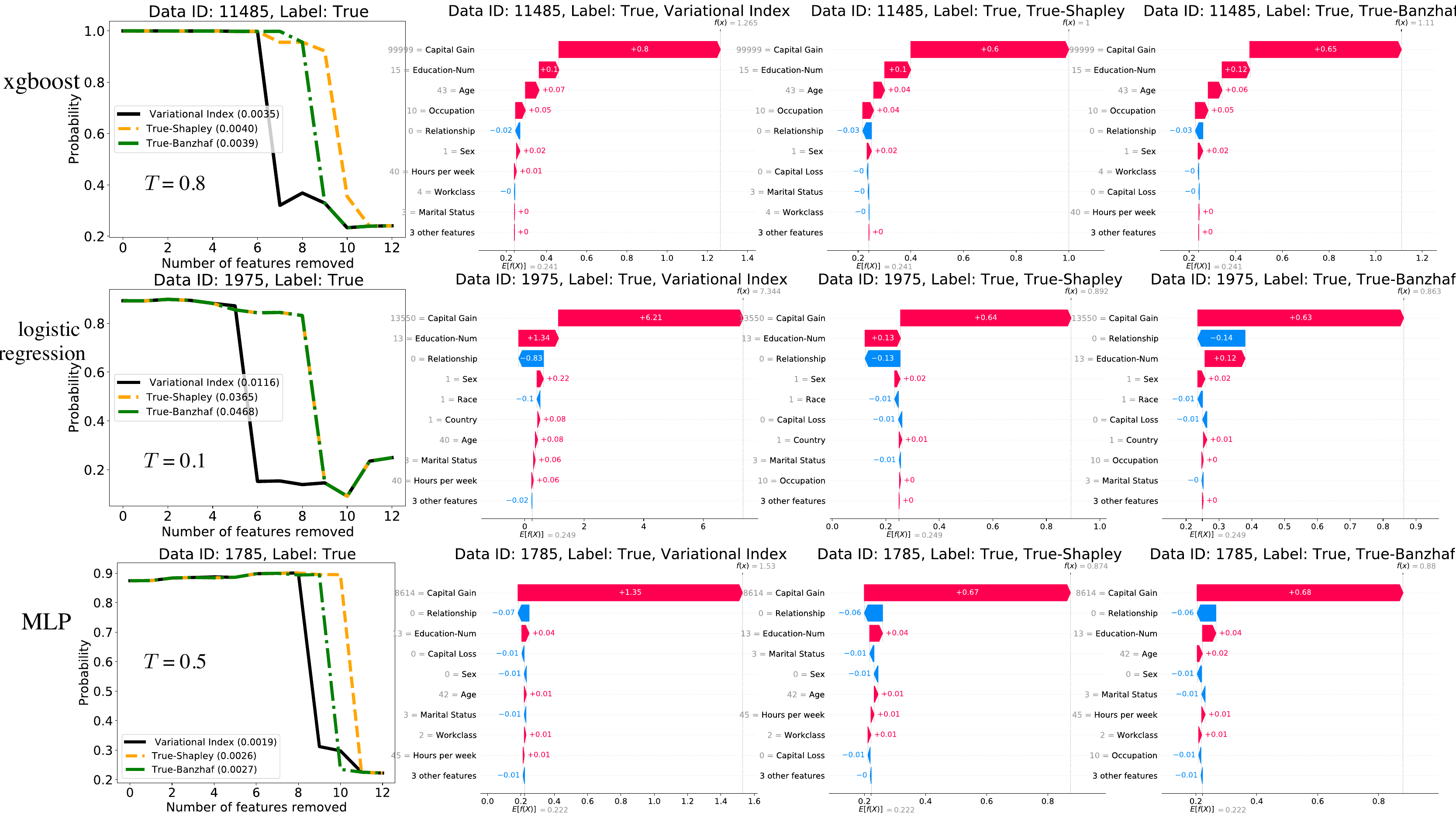}
	\caption{ First column: Change of predicted probabilities when removing features. The \emph{decoupling error} is included in the legend.   Last three columns: waterfall plots of  feature importance  from \varindex, Shapley and Banzhaf.}
	\label{fig_appen_feature_removing1}
\end{figure}

\begin{figure}[h!]
	\centering
	\includegraphics[width=1.02\textwidth]{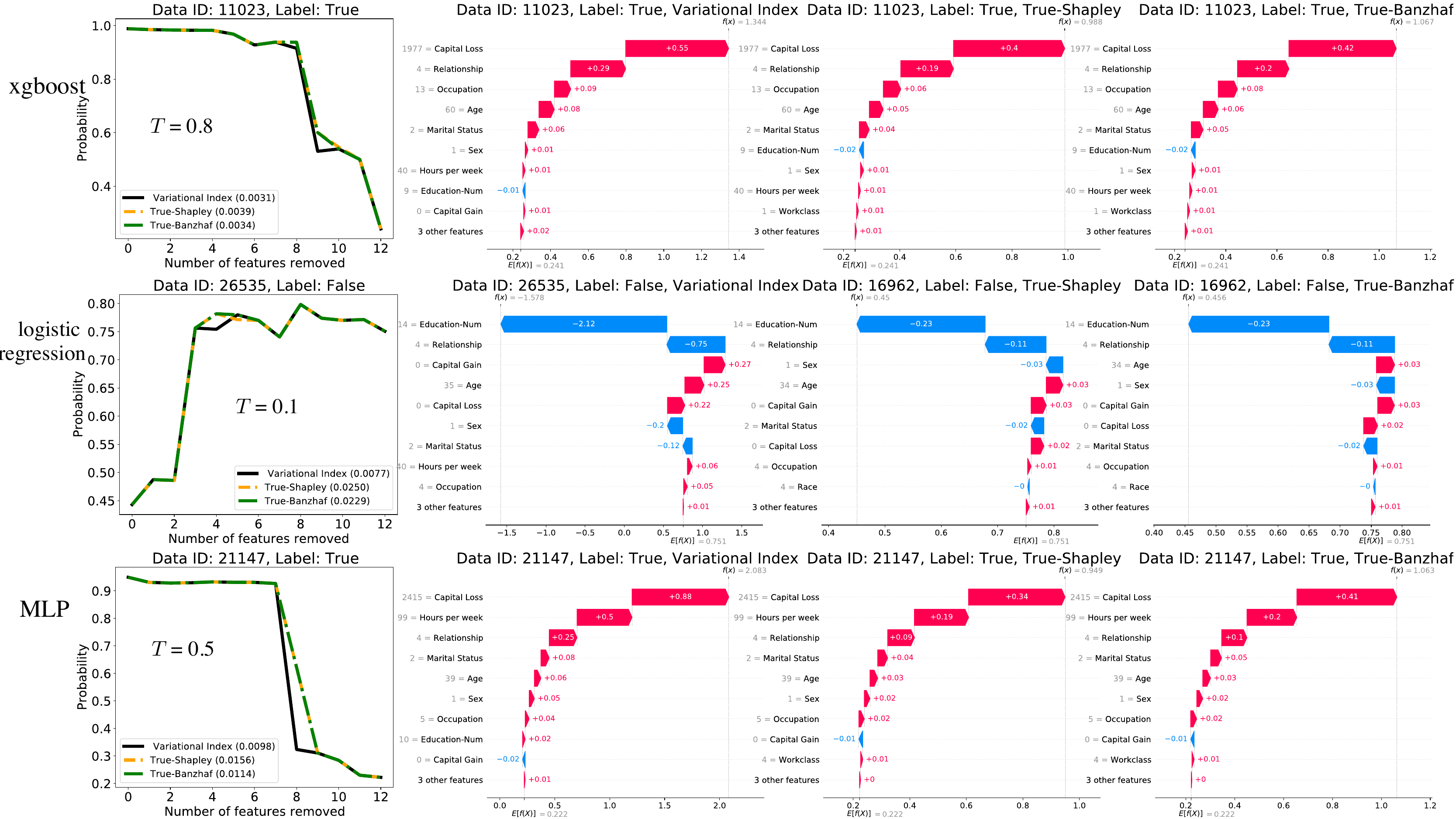}
	\caption{Not very distinguishable results. First column: Change of predicted probabilities when removing features. The \emph{decoupling error} is included in the legend.   Last three columns: waterfall plots of  feature importance  from \varindex, Shapley and Banzhaf.}
	\label{fig_appen_feature_removing2}
\end{figure}

\subsection{More Average Results on Feature Valuations}

We provide additional statistical results with the MLP model (\cref{fig_stats_mlp}) and logistic regression model (\cref{fig_stats_lr}).
Meanwhile, we also put the full statistics on the xgboost model in \cref{fig_stats_xgboost} since in the main text the table of the samples with ``False'' grouthtruth label is skipped due to space limit.
From \cref{fig_stats_mlp} one can observe that  \varindex induces different rankings of the features compared to Shapley and Banzhaf: \varindex ranks
``Marital Status'' the second, while Shapley and Banzhaf put it in the third location.

It is also very interesting to see that the logistic regression model (with the lowest training accuracy among the three models, shown in \cref{fig_stats_lr}) provides different ranking for the first two features compared to MLP and xgboost. For the samples with ``True'' groundtruth labels, ``Education-Num'' is the first important feature for the logistic regression model, while ``Captical Gain'' was ranked first for the MLP and xgboost.

\begin{figure}[h!]
	\centering
	\includegraphics[width=1.02\textwidth]{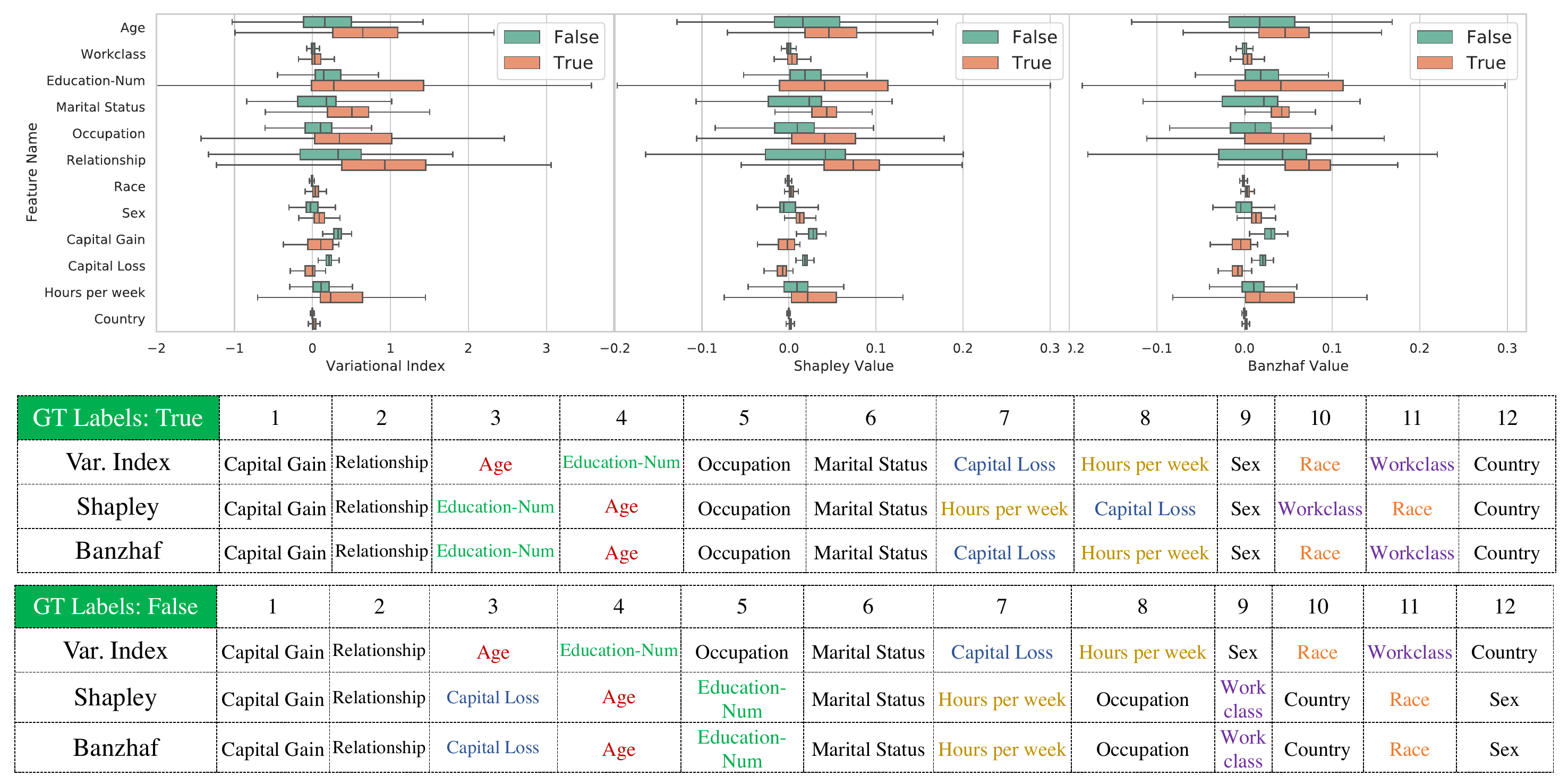}
	\caption{Full statistics on valuations with the \emph{xgboost} classifier (in the main text the last row is skipped due to space limit). First row: box plot of valuations
	returned by the three criteria.
	We always consider the predicted probability of the ground truth label. ``True'' means the samples with positive ground truth  label and ``False'' means with the negative ground truth label. Second and third rows: Average ranking of the 12 features. Colored texts denote different rankings among the three criteria.}
	\label{fig_stats_xgboost}
\end{figure}

\begin{figure}[h!]
	\centering
	\includegraphics[width=1.02\textwidth]{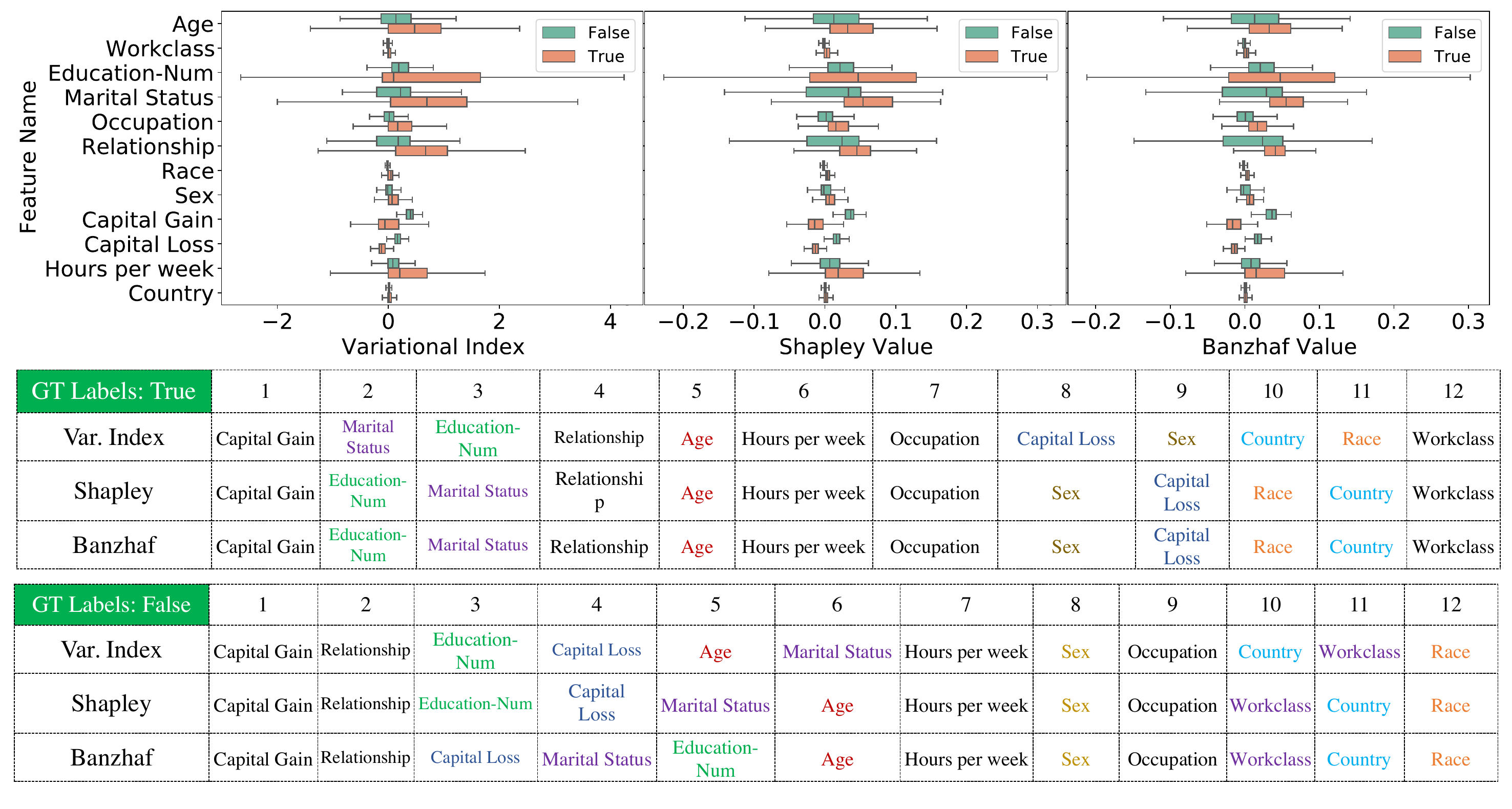}
	\caption{Statistics on valuations with the \emph{MLP} classifier. First row: box plot of valuations
	returned by the three criteria.
	We always consider the predicted probability of the ground truth label. ``True'' means the samples with positive ground truth  label and ``False'' means with the negative ground truth label. Second and third rows: Average ranking of the 12 features. Colored texts denote different rankings among the three criteria.}
	\label{fig_stats_mlp}
\end{figure}

\begin{figure}[h!]
	\centering
	\includegraphics[width=1.02\textwidth]{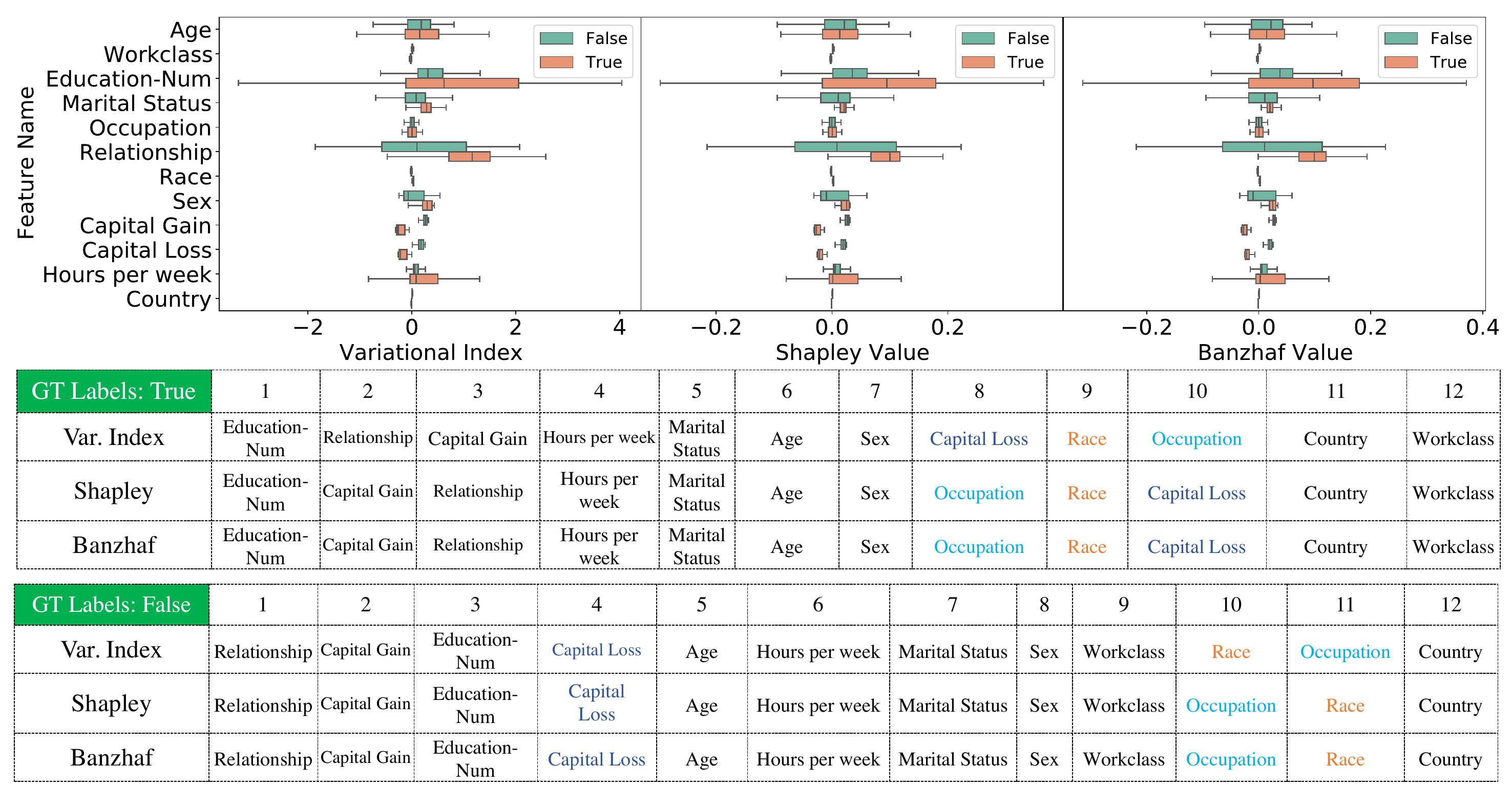}
	\caption{Statistics on valuations with the \emph{logistic regression} classifier. First row: box plot of valuations
	returned by the three criteria.
	We always consider the predicted probability of the ground truth label. ``True'' means the samples with positive ground truth  label and ``False'' means with the negative ground truth label. Second and Third rows: Average ranking of the 12 features. Colored texts denote different rankings among the three criteria.}
	\label{fig_stats_lr}
\end{figure}

\subsection{Experiments with More Players}
\label{apendix_large_players}

Furthermore, we  experiment with a bit larger number of players ($n=80$) using MCMC sampling to approximate the partial derivative in \cref{eq_partial_derivative}. The sampling based approximation works pretty fast.  \cref{tab_large}  shows  the top 15 ranked players returned by our method, Shapley value and Banzhaf value for a synthetic data valuation problem. Note that the ranking of  \varindex is more similar to that of Banzhaf value than that of Shapley value.

\begin{table}[h!]
 \caption{Indices of the top 15 ranked players returned by different methods.}
\label{tab_large}
\centering
 \begin{tabular}{r c c c c c c c c c c c c c c c }
 \hline
 Rank of players&	1&	2&	3&	4&	5&	6&	7&	8&	9&	10&	11&	12&	13&	14&	15 \\
 \hline
Variational Index&	9&	13&	11&	3&	54&	7&	18&	36&	32&	42&	46&	40&	6&	18&	23 \\
Banzhaf value	&9&	13	&11&	3&	54&	18&	7&	32&	46&	36&	2&	27&	40&	17&	10 \\
Shaplay Value&	52&	33&	27&	1&	4&	58&	32&	14&	42&	46&	40&	6&	18&	23&	47\\
 \hline
 \end{tabular}
\end{table}

\subsection{Effect of Number of Samples in MCMC Sampling}
\label{apendix_tradeoff}

Here we illustrate the accuracy sampling tradeoff when estimating the
gradient of multilinear extension using MCMC sampling.  The results is shown
in \cref{fig_tradeoff}.  One can observe that with more samples, \cref{alg_mfi_ga} will converge faster (in fewer number of epochs).

\begin{figure}[h!]
	\centering
	\includegraphics[width=.5\textwidth]{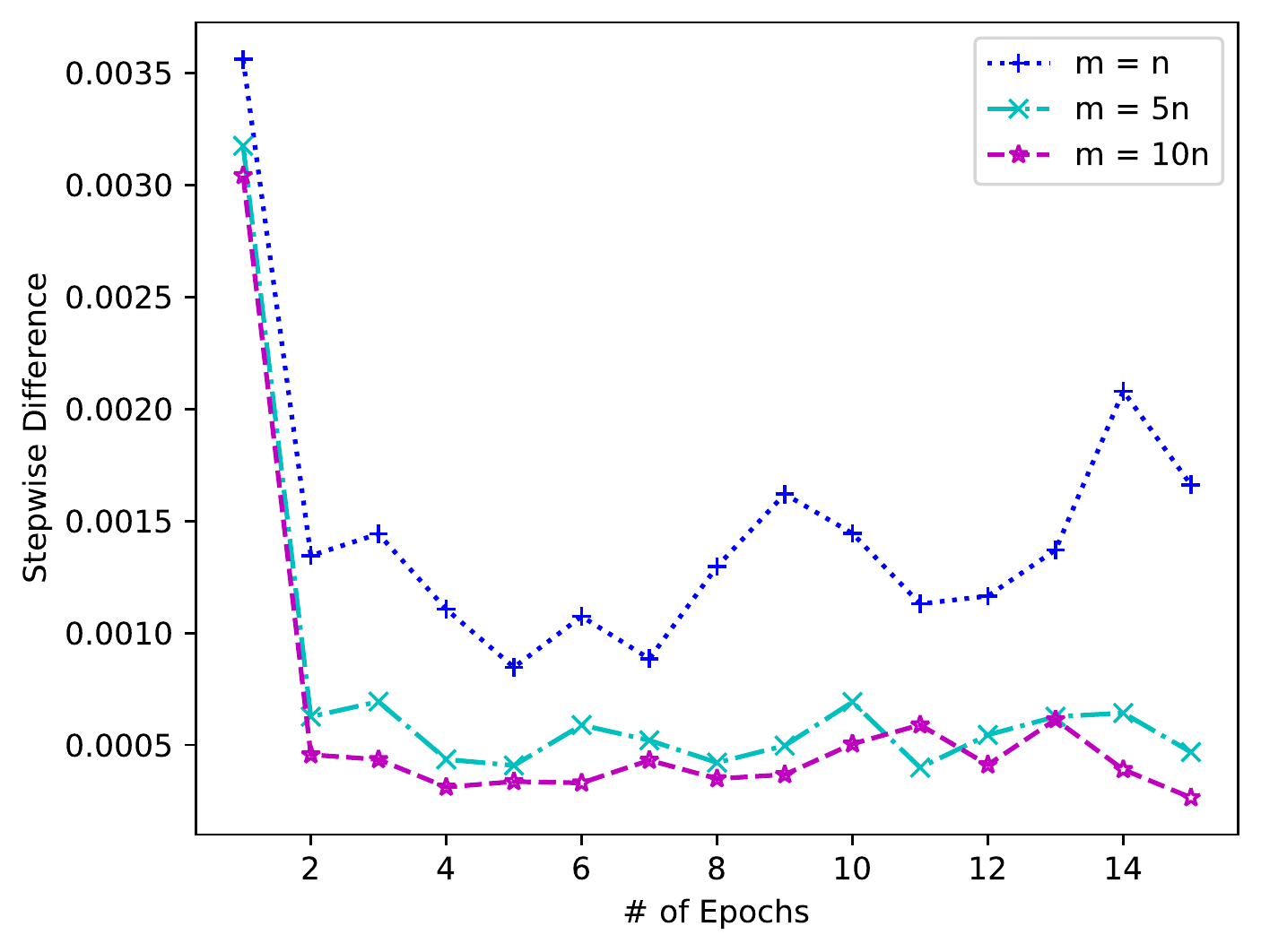}
	\caption{Stepwise difference with difference number of samples $m$ when  estimating the gradient of multilinear extension using MCMC sampling.  The three curves:  $m=n$, $m=5n$, $m=10n$.}
	\label{fig_tradeoff}
\end{figure}

\end{document}